\newtheorem{theorem}{Theorem}
\newtheorem{lemma}{Lemma}
\newtheorem{assumption}{Assumption}
\newtheorem{proof}{Proof}
\title{Learning Robust Spectral Dynamics  for \\ Temporal Domain Generalization}
\author{%
  En Yu, Jie Lu\thanks{Corresponding Author}, Xiaoyu Yang, Guangquan Zhang, Zhen Fang\\ \\
  Australian Artificial Intelligence Institute (AAII),\\
  University of Technology Sydney, Australia. \\
  % \texttt{en.yu-1, jie.lu, guangquan.zhang, zhen.fang\}@uts.edu.au; xiaoyu.yang@student.uts.edu.au}\\
  % examples of more authors
  % \And
  % Coauthor \\
  % Affiliation \\
  % Address \\
  % \texttt{email} \\
  % \AND
  % Coauthor \\
  % Affiliation \\
  % Address \\
  % \texttt{email} \\
  % \And
  % Coauthor \\
  % Affiliation \\
  % Address \\
  % \texttt{email} \\
  % \And
  % Coauthor \\
  % Affiliation \\
  % Address \\
  % \texttt{email} \\
}
\begin{document}

\maketitle

\begin{abstract}
Modern machine learning models struggle to maintain performance in dynamic environments where temporal distribution shifts, \emph{i.e., concept drift}, are prevalent. Temporal Domain Generalization (TDG) seeks to enable model generalization across evolving domains, yet existing approaches typically assume smooth incremental changes, struggling with complex real-world drifts involving long-term structure (incremental evolution/periodicity) and local uncertainties. To overcome these limitations, we introduce FreKoo, which tackles these challenges via a novel frequency-domain analysis of parameter trajectories. It leverages the Fourier transform to disentangle parameter evolution into distinct spectral bands. Specifically, low-frequency component with dominant dynamics are learned and extrapolated using the Koopman operator, robustly capturing diverse drift patterns including both incremental and periodicity. Simultaneously, potentially disruptive high-frequency variations are smoothed via targeted temporal regularization, preventing overfitting to transient noise and domain uncertainties. In addition, this dual spectral strategy is rigorously grounded through theoretical analysis, providing stability guarantees for the Koopman prediction, a principled Bayesian justification for the high-frequency regularization, and culminating in a multiscale generalization bound connecting spectral dynamics to improved generalization. Extensive experiments demonstrate FreKoo's significant superiority over SOTA TDG approaches, particularly excelling in real-world streaming scenarios with complex drifts and uncertainties.
\end{abstract}

\section{Introduction}
\label{sec:intro}
Modern machine learning models face significant challenges in dynamic environments where data distributions evolve over time~\cite{pmlr-v235-han24b}. Unlike static settings that assume an IID relationship between training and test data, real-world scenarios often involve continuously generated data streams, (e.g., user activity logs, sensor readings, financial transactions) exhibiting temporal dependencies and non-stationarity due to factors like shifting user behavior, environmental variations, or system changes~\cite{wang2023koopman}. These temporal dynamics lead to distributional shifts over time, known as \emph{concept drift}, which invalidates the alignment between historical and future out-of-distribution (OOD) data~\cite{lu2018learning,xu2025drift2matrix,wang2024distributionally}. Consequently, models trained on past data frequently fail to generalize to future instances, leading to performance degradation and reduced reliability. This has spurred increasing interest in \emph{Temporal Domain Generalization (TDG)}, which seeks to develop models capable of generalizing robustly across chronologically ordered source domains to unseen future target distributions~\cite{nasery2021training}.
\begin{figure}[htbp]
    \centering
    \includegraphics[width=0.97\linewidth]{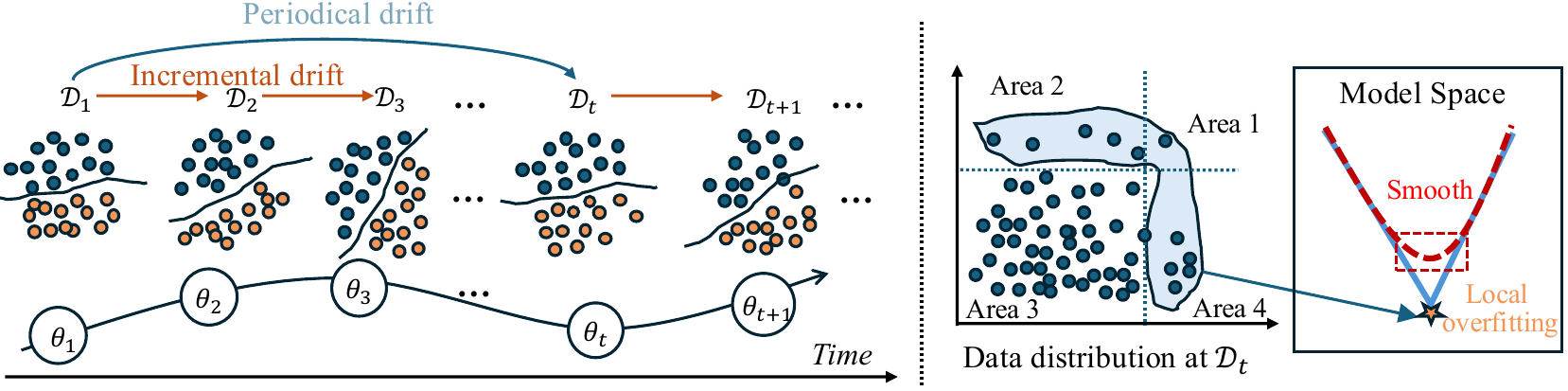} 
    \caption{Illustration of challenges in TDG. Left: Complex drifting situations  can involve both incremental shifts (e.g., $\mathcal{D}_{1} \rightarrow \mathcal{D}_{2} \rightarrow \mathcal{D}_{3}$) and long-term periodic returns (e.g., $\mathcal{D}_{t}$ resembling $\mathcal{D}_{1}$ after a cycle).
    The underlying optimal parameters $\theta_{t}$ evolve accordingly. 
    Right: Within any domain $\mathcal{D}_{t}$, uncertainties or non-IID data concentrated in Areas 1, 2 and 4 compared to data in Area 3 can lead to local overfitting (solid blue line). Robust generalization requires converging to a smoother area (red dashed line) that is less sensitive to such localized noise or outliers.}
    \label{figure:setting}
\end{figure}

Recent progress in TDG falls into two categories: \textit{data-driven}~\cite{chang2023coda} and \textit{model-centric}~\cite{bai2023temporal} approaches. \textit{Data-driven methods} enhance temporal robustness by simulating future distributions via OOD data generation~\cite{jin2024temporal,zeng2023foresee} or learning temporally invariant features~\cite{zeng2024generalizing}. \textit{Model-centric methods} often leverage dynamic modeling, such as time-sensitive regularization or parameter forecasting, to adapt models to evolving distributions~\cite{nasery2021training, cai2024continuous}. However, existing methods often implicitly assume that concept drift is primarily incremental or focus on ensuring local smoothness between adjacent domains. This overlooks a crucial aspect of many real-world scenarios: \emph{1) Difficulty in Modeling Long-term Periodical Dynamics}: Real-world concept drift frequently exhibits long-term periodicity (e.g., seasonality, weekly/daily user patterns, economic cycles), not just smoothly incremental changes~\cite{yu2024online,jiao2022dynamic}, as shown in the left of Figure~\ref{figure:setting}. Current approaches struggle to capture these recurring patterns spanning longer time horizons~\cite{bai2023temporal}. Their effectiveness often diminishes, particularly near phase transitions of periodic cycles where the inability to anticipate pattern recurrence hinders generalization. \emph{2) Vulnerability to Overfitting Domain-Specific Uncertainties}: Current TDG methods typically segment continuous data streams into discrete temporal domains. However, the complex drift patterns inherent in real-world streams make it difficult to guarantee that data within each domain remains IID, as visualized in Appendix~\ref{app:Qualitative-un} Figure~\ref{fig:visual_uncertainties}. Consequently, localized uncertainties and non-IID characteristics within domains can complicate optimization and increase the risk of overfitting to domain-specific artifacts (illustrated in the right of Figure~\ref{figure:setting}), ultimately undermining stable cross-temporal generalization~\cite{niu2023towards}.

% Consequently, achieving robust generalization under complex concept drift remains a pressing challenge.

These challenges underscore the necessity for principled frameworks capable of capturing long-term incremental/periodic dynamics while filtering local uncertainties. Intuitively, the trajectory of model parameters over time provides a comprehensive reflection of the underlying concept drift, encapsulating its long-range evolution alongside its unpredictable uncertainties. Thus, we turn to frequency-domain analysis of the parameter trajectories. Frequency-domain analysis inherently isolates dominant dynamics by representing them as distinct spectral peaks, facilitating robust modeling of long-term evolving patterns~\cite{liu2023koopa}. Moreover, as analyzed in ~\cite{liu2023adaptive}, spectral representations can reveal dynamic temporal structures often obscured by conventional statistical measures (e.g., mean, variance), thereby providing a richer understanding of parameter evolution. Specifically, transforming trajectories into the frequency domain reveals that dominant evolving trends and periodic components typically manifest as energy concentrations at low frequencies, whereas transient uncertainties typically dominate higher frequencies~\cite{ye2024atfnet,ye2024frequency}. This spectral separation naturally allows for targeted dynamic modeling: isolating and modeling the predictable long-term low-frequency dynamics while permitting robust management of disruptive high-frequency variations. Compared to purely time-domain approaches, this frequency-domain perspective offers a more robust mechanism for modeling diverse temporal dynamics in real-world scenarios with complex drifts and uncertainties. 

Motivated by this, we propose \textbf{FreKoo}, a \textit{Frequency-Koopman Regularized} framework that enhances temporal generalization by integrating spectral decomposition with Koopman operator theory. Specifically, as shown in Figure~\ref{fig:framework}, FreKoo decomposes model parameter trajectories into low-frequency and high-frequency components via the Fourier transform. Koopman operator theory~\cite{koopman1931hamiltonian,brunton2021modern} is employed to learn a stable linear model for the evolution of low-frequency dynamics, enabling robust prediction of incremental trends and periodicity. Simultaneously, it introduces a targeted temporal smooth regularization to the high-frequency components, promoting stable convergence and preventing overfitting to local uncertainties. Furthermore, we derive a multiscale generalization bound that characterizes the interplay between Koopman stability, temporal regularization, and generalization robustness. This theoretical foundation provides rigorous insight into how FreKoo improves generalizability in dynamic environments, offering a unified and principled solution to complex concept drifts in TDG. Our main contributions are summarized as follows:
\begin{itemize}
   \item  We pioneer a spectral analysis perspective on parameter trajectories for TDG. This enables principled disentanglement of complex and multiscaled temporal dynamics, offering a novel pathway to address limitations of prior methods in handling long-term periodicity and domain-specific uncertainties.

    \item We propose FreKoo, a novel end-to-end framework that materializes this spectral insight. It synergistically combines Koopman operator extrapolation for stable low-frequency dynamics with targeted regularization of high-frequency uncertainties, thereby enhancing robustness against complex drifting situations.
    
    \item We establish rigorous theoretical foundation for FreKoo via a novel multiscale generalization bound. It connects FreKoo's spectral dominant dynamics and temporal smooth regularization to stable generalization. Extensive experiments further demonstrate FreKoo's significant superiority particularly in real-world scenarios with periodicity and uncertainties.

  % \item We introduce FreKoo, a novel end-to-end framework adopting a frequency-domain perspective for TDG. FreKoo pioneers the Fourier decomposition of model parameter trajectories, enabling a principled disentanglement of stable low-frequency trends from transient high-frequency fluctuations.
  % \item We propose a dual dynamic modeling strategy using the Koopman operator for stable low-frequency extrapolation (capturing dynamics) and targeted regularization for high-frequency uncertainty suppression, enhancing robustness against complex drifting scenarios.
  % \item We establish a theoretical foundation by deriving a novel multiscale generalization bound, which formally connects spectral control, Koopman dynamics, and OOD generalization performance under concept drift, providing rigorous validation for FreKoo.
\end{itemize}
\begin{wrapfigure}{r}{0.5\textwidth}
    \centering
    \includegraphics[width=0.5\textwidth]{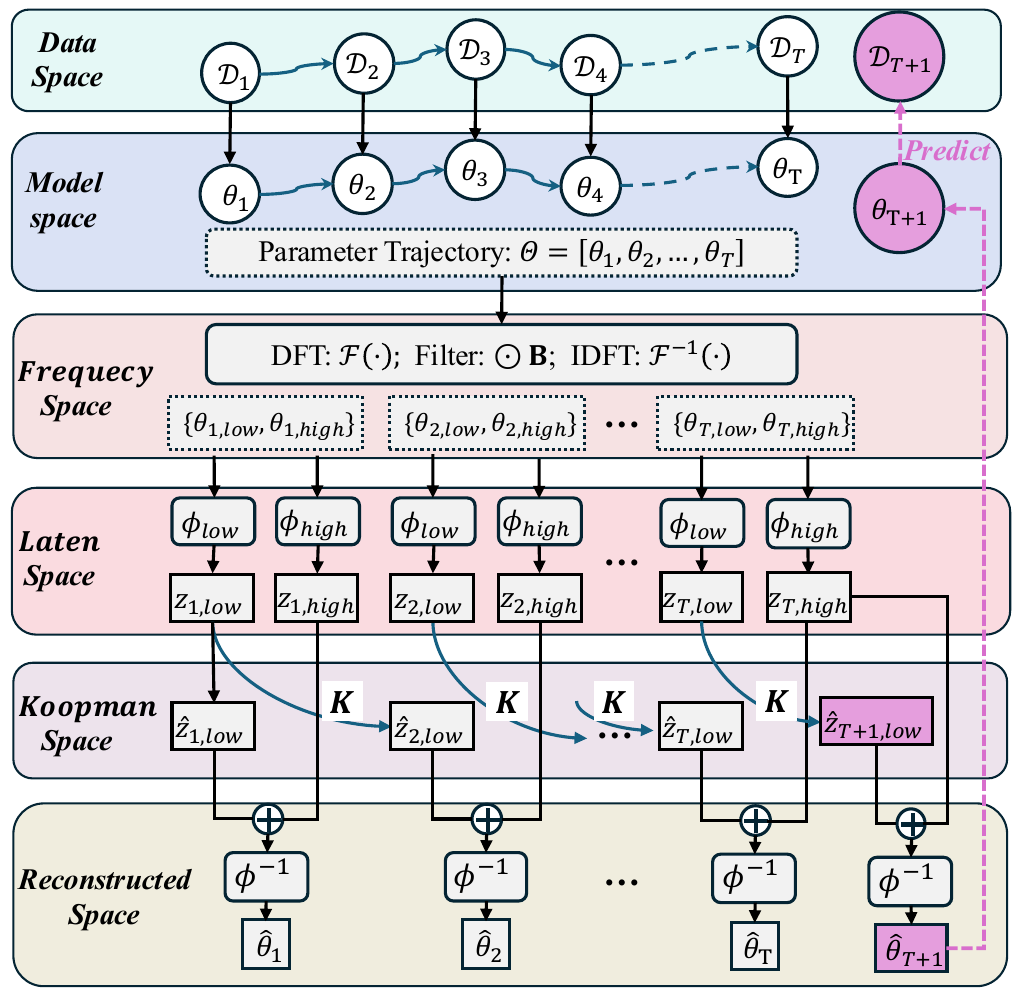}
    \caption{FreKoo Framework. It decomposes model parameter trajectories into low-frequency and high-frequency components via the Fourier transform. Then, Koopman operator is employed to learn the evolution of low-frequency dynamics, enabling robust prediction of incremental trend and periodicity. Also, it introduces a targeted temporal difference regularization to the high-frequency components, promoting smooth convergence and preventing overfitting to local uncertainties.}
    \label{fig:framework}
\end{wrapfigure}

\section{Related Works}

\textbf{Temporal Domain Generalization.}
TDG specifically tackles scenarios where data distributions evolve over time (i.e., concept drift), aiming to train models on historical data that generalize to near future domains~\cite{nasery2021training}. This requires explicitly capturing and leveraging the dynamics of the distribution shifts over time. Current TDG research broadly follows two main directions. Data-driven approaches seek temporal robustness by either synthesizing future data via OOD generation~\cite{chang2023coda, jin2024temporal,zeng2023foresee}, or learning time-invariant representations~\cite{zeng2024generalizing}. Model-centric methods focus on incorporating dynamic adaptation mechanisms directly into the learning process, such time-sensitive regularizations that encourage smooth decision boundaries over time~\cite{nasery2021training} or forecasting future parameters based on past evolution~\cite{bai2023temporal}. Furthermore, recognizing the continuous nature of time, Continuous TDG (CTDG) methods have been proposed, often treating time as a continuous index for adaptation~\cite{cai2024continuous, wang2020continuously}. However, despite progress, effectively modeling long-term periodic patterns and mitigating sensitivity to intra-domain uncertainties leading to local overfitting remain key challenges across existing TDG methods.

\textbf{Concept Drift.} Concept drift signifies a change in the underlying data distribution over time, i.e., $P_{t+1}(X, y) \neq P_t(X, y)$, fundamentally challenges model reliability by requiring dynamic adaptation~\cite{lu2018learning}. Concept drift manifests in various forms, including sudden, gradual, incremental, and recurring (periodic) patterns. Much prior research treats it as unpredictable and often employ detect-then-adapt strategies~\cite{lu2025early, jiao2023reduced, yang2025adapting}. In contrast, TDG aims to leverage continuous or predictable evolutionary dynamics for proactive generalization~\cite{bai2023temporal}. However, existing TDG methods primarily model smooth incremental changes, consequently struggling to capture complex, long-range structures like periodicity which are common in real-world streams and demand modeling approaches beyond simple monotonic progression assumptions.

\textbf{Frequency Learning.}
Frequency learning has emerged as a promising paradigm for modeling non-stationary temporal dynamics in machine learning~\cite{zhou2022fedformer,yi2024filternet}. By transforming time-domain signals into the spectral space, frequency-aware methods effectively capture periodic patterns and multi-scale trends that remain obscured in raw signals. For instance, prior works~\cite{cutler2000robust,hurley2022radio,wang2021enhanced} leverage spectral decomposition to identify periodic structures in temporal data. FAN~\cite{ye2024frequency} further demonstrates that frequency-domain representations can reveal dynamic features—including long-term trends and seasonal variations—that traditional statistical measures (e.g., mean, variance) fail to characterize. These insights provide a foundational basis for our work, as we extend frequency learning to a novel perspective, modeling parameter trajectories under concept drift. 
% Additional related works can be seen in Appendix~\ref{app:adrelatedworks}.

% Unlike existing approaches that focus on data-level spectral properties, we pioneer parameter-centric spectral analysis, decomposing parameter evolution into low-frequency (stable trends/periodicity) and high-frequency (noise/transient) components. This enables direct modeling of recurring patterns and noise-robust adaptation—a critical advancement for temporal generalization in non-stationary environment.
\section{Methodology}
\label{sec:method}
\subsection{Preliminary}
\textbf{Temporal Domain Generalization.} Given a sequence of $T$ temporal source domains $\{\mathcal{D}_t\}_{t=1}^{T}$, each domain at timestamp $t$ is defined as $\mathcal{D}_t = \{(\mathbf{x}_t^{(i)}, y_t^{(i)})\}_{i=1}^{N_t}$, where $(\mathbf{x}_t^{(i)}, y_t^{(i)}) \in \mathcal{X}_t \times \mathcal{Y}_t$ is a sample-label pair, $N_t$ is the number of samples of  $\mathcal{D}_t$, and $\mathcal{X}_t$ and $\mathcal{Y}_t$ denote the input and label spaces at timestamp $t$. We assume that each source domain $\mathcal{D}_t$ is drawn from a time-dependent distribution $P_{t}(X, Y)$, and the sequence $\{P_t\}_{t=1}^T$ exhibits  \emph{concept drift}, which may include incremental or long-term periodical trends. The objective of TDG is to build a model $g(\cdot; \theta_t ): \mathcal{X}_t \rightarrow \mathcal{Y}_t$ using historical source domains $\{\mathcal{D}_t\}_{t=1}^{T}$ that generalizes effectively to an OOD future target domain $\mathcal{D}_{T+1} \sim P_{T+1}(X, Y)$, without access to $\mathcal{D}_{T+1}$ during training~\cite{nasery2021training}. 

% This requires capturing the underlying evolutionary dynamics governing the sequence $\{P_t\}_{t=1}^T$ and extrapolating them robustly.

\textbf{Challenges.}
Prevailing TDG methods often fall short under complex concept drifts. Typically constrained by assumptions of incremental change or local smoothness, they struggle to simultaneously capture long-range dynamics like periodicity (\emph{Challenge 1}) and filter transient noise or domain-specific uncertainties (\emph{Challenge 2}), thereby failing to effectively balance stability with adaptability crucial for real-world scenarios.

\subsection{Proposed Method: FreKoo}
To address these challenges, we model the evolution of parameters  $\{\theta_1, \theta_2, \ldots, \theta_T\}$ and extrapolate to predict to predict $\theta_{T+1}$, thereby enabling robust inference on future unseen domain $\mathcal{D}_{T+1}$ with $g(\cdot; \theta_{T+1} ): \mathcal{X}_{T+1} \rightarrow \mathcal{Y}_{T+1}$. 
We establish a dynamic system perspective for Temporal Domain Generalization (TDG) by modeling parameter evolution under complex drifting situations~\cite{smekal2024towards}. Let $\theta_t \in \mathbb{R}^D$ denote the parameter of the model $g_t: \mathcal{X}_t \rightarrow \mathcal{Y}_t$ at time $t$, the evolution over time can be described by a potentially nonlinear stochastic difference form:
\begin{equation}
    \theta_{t+1} = \boldsymbol{\Phi}(\theta_t) + \epsilon_t,
    \label{eq:dynamics}
\end{equation}
where $\boldsymbol{\Phi}: \mathbb{R}^D \rightarrow \mathbb{R}^D$ captures deterministic transitions induced by concept drift, and $\epsilon_t$ represents stochastic perturbations. This formulation explicitly addresses concept drift by treating TDG as a parameter trajectory prediction problem governed by a stochastic nonlinear dynamical system.

Directly modeling the complex dynamics of the parameter trajectory is difficult, as it intertwines potentially different drifts and uncertainties in real world. To overcome this, we introduce a \emph{frequency-aware perspective} as a powerful inductive bias. We hypothesize that different frequency components within the parameter trajectory ${\Theta}$ correspond to distinct aspects of the concept drift: Low-frequency component captures the slowly-varying trends and the long-term periodic patterns (\emph{Challenge 1}). High-frequency component primarily reflect transient dynamics and domain-specific spurious noise (\emph{Challenge 2}). By disentangling these frequency regimes, we can robustly model and extrapolate the stable structured low-frequency dynamics while suppressing the volatile, potentially misleading high-frequency variations. This insight motivates the core of our proposed FreKoo framework.

% low-frequency components naturally encode slowly-varying or periodic trends, while high-frequency components are more likely to reflect transient dynamics or spurious noise. By disentangling these frequency regimes, we can not only preserve meaningful periodic signals that underlie temporal correlations (Challenge 1), but also suppress high-variance noise that hinders stable extrapolation (Challenge 2). Motivated by this insight, we introduce a frequency-aware decomposition of the transition dynamics.

\subsubsection{Spectral Decomposition}
Given a time-varying parameter trajectory $\Theta = [\theta_1, \theta_2, \dots, \theta_T]^\top \in \mathbb{R}^{T \times D}$ with $T$ timesteps and $D$ parameter dimensions, we utilize Fourier analysis to disentangle dominant dynamics from high-frequency fluctuations~\cite{liu2023koopa,qin2024evolving}. Specifically, we first transfer it to frequency space via Discrete Fourier Transform (DFT) along the temporal axis, yielding a spectral representation $\Theta^{f}$ via: 
\begin{equation}
\Theta^{f} = \mathcal{F}(\Theta), \quad \Theta^{f}[f,d] = \sum_{t=1}^{T} \Theta[t,d] \cdot e^{-j2\pi ft/T},
\label{eq:DFT}
\end{equation}
where $\mathcal{F}: \mathbb{R}^{T \times D} \to \mathbb{C}^{N_{{freq}} \times D}$ denotes the DFT operator and $N_{{freq}} = \lfloor T/2 \rfloor + 1$ represents the number of frequency components. $f$ serves as the frequency bin index while $d$ indicates the parameter dimension index. 
% This transformation enables identification of periodic patterns that may be obscured in the time domain.

To identify dominant frequencies, we compute the average spectral magnitude across parameter dimensions as an energy proxy:
\begin{equation}
    M_f = \frac{1}{D} \sum_{d=1}^D |\hat{\Theta}[f, d]|, \quad \forall f \in \{0,\dots,N_{{freq}}-1\},
    \label{eq:avg_magnitude}
\end{equation}
where $M_f \in \mathbb{R}_+$ represents the normalized energy contribution of frequency $f$. The magnitude vector $M = [M_0, M_1, \dots, M_{N_{{freq}}-1}] \in \mathbb{R}_+^{N_{{freq}}}$ aggregates these values across all frequencies, serving as a dimension-agnostic measure of frequency importance. This averaging operation inherently suppresses parameter-wise variations in oscillation patterns, ensuring robustness to localized noise in individual dimensions. For a specified energy preservation ratio $\tau \in [0,1]$, we select the top-$Q$ frequency indices $\mathcal{Q}_{{top}} = \{ f_1, f_2, \dots, f_Q \}$ where $Q = \lceil \tau N_{{freq}} \rceil$ and $M_{f_1} \ge M_{f_2} \ge \dots \ge M_{f_{N_{{freq}}}}$, ensuring $\mathcal{Q}_{{top}}$ contains the $Q$ largest magnitudes. Then, we construct a frequency-domain binary mask $\mathbf{B} \in \{0,1\}^{N_{{freq}} \times D}$ that preserves selected frequencies:
\begin{equation}
    \mathbf{B}[f, d] =
    \begin{cases}
        1 & \text{if } f \in \mathcal{Q}_{{top}}, \\
        0 & \text{otherwise},
    \end{cases}
    \quad \text{for all } d=1, \dots, D.
    \label{eq:mask_power_latex}
\end{equation}

Finally, $\Theta$ can be decomposed into a component $\Theta_{low}$ (dominant frequencies) and a component $\Theta_{high}$ (high-frequency fluctuations) as follows:
\begin{equation}
\centering
    \begin{aligned}
        \Theta^{f}_{low} &= \Theta^{f} \odot \mathbf{B}, \quad 
    \Theta^{f}_{high} = \Theta^{f} \odot (1 - \mathbf{B}), \\
    \Theta_{low} &= \mathcal{F}^{-1}(\Theta^{f}_{low}), \quad
    \Theta_{high} = \mathcal{F}^{-1}(\Theta^{f}_{high}),
    \end{aligned}
    \label{eq:IDFT}
\end{equation}
where $\odot$ denotes element-wise multiplication, $1$ represents a matrix of ones, and $\mathcal{F}^{-1}$ is the corresponding inverse DFT. By the linearity of $\mathcal{F}^{-1}$, it holds that $\Theta = \Theta_{low} + \Theta_{high}$. This decomposition enables separate analysis of dominant dynamics  $\Theta_{{low}}$ and high-frequency fluctuations  $\Theta_{{high}}$.

Therefore, the spectral decomposition enables targeted handling of distinct components, and Eq.~\eqref{eq:dynamics} can be further expressed as: $\boldsymbol{\Phi}(\theta_t) = \boldsymbol{\Phi}_{{low}}(\theta_{t,low}) + \boldsymbol{R}_{{high}}(\theta_{t,high})$,
where $\boldsymbol{\Phi}_{{low}}$ and $\boldsymbol{R}_{{high}}$ govern distinct spectral regimes. This decomposition enables: 1) Explicit modeling of dominant evolving patterns through $\boldsymbol{\Phi}_{{low}}$, which exploits the long-term periodical and increment evolving patterns, i.e., targeting for \emph{Challenge 1}; 2) Targeted regularization of uncertainties via $\boldsymbol{R}_{{high}}$, i.e., targeting for \emph{Challenge 2}. By jointly handling these regimes, the method enhances prediction stability against overfitting to domain-specific uncertainties while maintaining fidelity to long-term trends.

\subsubsection{Robust Frequency Dynamics Learning}
\textbf{Koopman Operator for Dominant Dynamics.} To model the dominant and temporally stable dynamics underlying the evolution of model parameters, we employ Koopman operator theory, which provides a linear surrogate for nonlinear dynamical systems by mapping them into a higher-dimensional functional space~\cite{koopman1931hamiltonian,brunton2021modern}. Specifically, it assumes that the state can be mapped into a higher-dimensional Hilbert space via a measurement function $\varphi$, where the evolution is governed by an infinite-dimensional linear operator $\mathcal{K}$, such that: $\mathcal{K} \circ \varphi(x_t) =\varphi (\boldsymbol{\Phi_x}(x_t)) = \varphi(x_{t+1})$, where $\boldsymbol{\Phi}$ is the unknown nonlinear transition function that governs the temporal evolution of the system~\cite{liu2023koopa}. 
% Although $\mathcal{K}$ is infinite-dimensional in general, recent advances have demonstrated that its effect can be approximated in a learned latent space via neural architectures~\cite{liu2023koopa}. 
Formally, let $\{\theta_{1,{low}}, ..., \theta_{T,{low}}\}$ denote the low-frequency components of model parameters extracted via spectral decomposition. These components capture incremental and periodic variations that evolve gradually over time. We posit that this sequence admits a near-linear evolution in a latent space under a Koopman operator. Specifically, we seek an encoder $\phi: \mathbb{R}^D \rightarrow \mathbb{R}^m$ and a decoder $\phi^{-1}: \mathbb{R}^m \rightarrow \mathbb{R}^D$, such that the temporal evolution can be expressed as:
\begin{equation} 
\begin{aligned} 
z_{t,{low}} = \phi(\theta_{t,{low}}), 
\ \hat{z}_{t+1, {low}} = \mathbf{K} z_{t, {low}}, \ \hat{\theta}_{t+1,{low}} = \phi^{-1}(\hat{z}_{t+1,{low}}), \end{aligned} 
\end{equation}
where $\mathbf{K} \in \mathbb{R}^{m \times m}$ is a learnable linear transformation that approximates the finite-dimensional Koopman operator. To enforce temporal consistency in the predicted low-frequency dynamics, we define a reconstruction loss:
\begin{equation}
    \mathcal{L}_{{koop}} = \sum_{t=1}^{T-1} \|z_{t+1,{low}} - \hat{z}_{t+1,{low}}\|_2^2,
    \label{eq:loss_koop}
\end{equation}
which encourages accurate modeling of long-range parameter trends and mitigates error accumulation over time. This structure allows the model to forecast stable dynamics while preserving interpretability through linear temporal progression in the latent space.

\textbf{Regularization for High-Frequency Components.}
While the low-frequency components capture dominant trends suitable for extrapolation via Koopman dynamics, the high-frequency components $\Theta_{high} = \{\theta_{1,high}, ..., \theta_{T,high}\}$ often represent transient noise or domain-specific uncertainties. Explicitly extrapolating these components could amplify noise and hinder generalization~\cite{niu2023towards}.
Therefore, instead of modeling forward dynamics for the high-frequency part, we focus on regularizing its behavior and incorporating its current state into the prediction. We use an encoder $\phi_{high}: \mathbb{R}^D \rightarrow \mathbb{R}^m$ to map the high-frequency component into the latent space: $z_{t,{high}} = \phi(\theta_{t,{high}})$, which captures the high-frequency information at time $t$. To mitigate overfitting to these potentially noisy high-frequency signals and encourage temporal smoothness in the underlying parameter trajectory, we impose the regularization on the sequence $\Theta_{high}$:
\begin{equation}
    \mathcal{R}_{{high}} = \sum_{t=1}^{T-1} \|z_{t+1,{high}} - z_{t,{high}}\|_2^2.
    \label{eq:loss_highfre}
\end{equation}
This regularization encourages the high-frequency components to vary smoothly over time, implicitly suppressing sharp, transient changes that are less likely to generalize. 

% It acts as a constraint during the overall optimization of the parameter trajectory $\Theta$.

\textbf{Parameter Prediction and Joint Optimization.}
In FreKoo, to distinguish between high and low frequencies, we use separate encoders  ($\phi_{low}$, $\phi_{high}$) for each frequency component, while sharing the same decoder ($\phi^{-1}$) to ensure consistency in the fusion of different components.
To predict the parameters for the next timestamp, $\hat{\theta}_{t+1}$, we combine the extrapolated low-frequency latent with the regularized high-frequency latent and pass the result through the shared decoder:
\begin{equation}
    \hat{\theta}_{t+1} = \phi^{-1}(\mathbf{K}\phi_{{low}}(\theta_{t, {low}}) + \phi_{{high}}(\theta_{t, {high}}).
    \label{eq:param_prediction}
\end{equation}
This mechanism leverages the stability of the predicted low-frequency dynamics while incorporating the immediate context from the high-frequency component at time $t$. To ensure the overall predicted parameters align with the actual sequence, we further introduce a reconstruction error term:
\begin{equation}
    \mathcal{L}_{{rec}} = \sum_{t=1}^{T-1} \|\theta_{t+1} - \hat{\theta}_{t+1}\|_2^2.
    \label{eq:loss_rec} 
\end{equation}
Therefore, the final objective function integrates the task loss $\mathcal{L}_{{task}} = \sum_{t=1}^T \ell(g(X_t; \theta_t), Y_t) $, the Koopman dynamics loss for low frequencies, the high-frequency smoothness regularization, and the overall parameter reconstruction loss:
\begin{equation}
    \mathcal{L}_{{total}} = \mathcal{L}_{{task}} + \alpha \mathcal{L}_{{rec}}  + \beta \mathcal{L}_{{koop}} + \gamma \mathcal{R}_{{high}},
    \label{eq:loss_total}
\end{equation}
where $\alpha$, $\beta$, and $\gamma$ are hyperparameters controlling the relative contributions. This joint optimization framework encourages the model to learn a parameter trajectory $\Theta$ that not only performs well on historical tasks but also exhibits predictable low-frequency dynamics and smooth high-frequency variations, facilitating robust extrapolation to $\theta_{T+1}$ for the unseen future domain. The whole learning process is detailed in Appendix Algorithm~\ref{alg:frekoo_e2e}.

\subsection{Theoretical Insights}
\label{sec:theory}
FreKoo's frequency-aware approach to TDG separates parameter dynamics into low-frequency ($\Theta_{low}$) for Koopman extrapolation and high-frequency ($\Theta_{high}$) for regularized smoothing. This strategy is theoretically grounded to enhance generalization to unseen future domains. We outline a generalization bound for the expected error on the target domain $D_{T+1}$. Let $\hat{\theta}_{T+1}$ be FreKoo's predicted parameter for $D_{T+1}$ (derived from Eq.~\eqref{eq:param_prediction}), and let $\theta_{T+1}^{\star}=\arg\min_{\theta} \mathbb E_{P_{T+1}}\!\bigl[\ell(g(X;\theta),Y)\bigr]
$ be the optimal parameter under the target distribution $P_{T+1}$. Assume the loss $\ell$ is $L_\ell$-Lipschitz in its first argument, the base model $g(X;\theta)$ is $L_g$-Lipschitz w.r.t. $\theta$, and our encoders/decoder satisfy Lipschitz conditions (Assumption~\ref{ass:lipschitz} in Appendix~\ref{app:assumptions}). Further, assume the hypothesis class $\mathcal{G} = \{X \mapsto g(X; \phi^{-1}(z))\}$ has Rademacher complexity bounded by $C/\sqrt{n}$ (Assumption~\ref{ass:rademacher} in Appendix~\ref{app:assumptions}), where $n$ is the target domain sample size.
\begin{theorem}[Multiscale Generalization Bound]
\label{thm:generalization}
Under the stated assumptions, with probability at least $1-\delta > 0$, the expected excess risk on the target domain satisfies:
\begin{equation}
   \begin{aligned}
    \mathcal E_{T+1}&:=\mathbb E_{P_{T+1}}\!\bigl[\ell(g(X;\widehat\theta_{T+1}),Y)\bigr]
                    -\mathbb E_{P_{T+1}}\!\bigl[\ell(g(X;\theta_{T+1}^{\star}),Y)\bigr]\\
        & \leq L_\ell L_g L_{dec}(\mathcal{E}_{low} + \mathcal{E}_{high}) + \mathcal{O}(L_\ell L_g (\frac{C}{\sqrt{n}} + \sqrt{\frac{\log(1/\delta)}{n}})),
\end{aligned} 
\label{eq:gene_bound}
\end{equation}
where $\mathcal{E}_{low}$ and $\mathcal{E}_{high}$ are errors in the predicted low and high-frequency latent components.
\end{theorem}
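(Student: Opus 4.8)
The plan is to descend from the loss level to the parameter level using the Lipschitz hypotheses, and then to split the resulting parameter-space gap into a \emph{prediction} component --- the quantity the theorem denotes $\mathcal E_{low}+\mathcal E_{high}$, transported through the Lipschitz decoder --- and a \emph{statistical} component controlled by uniform convergence over $\mathcal G$. Writing $R(\theta):=\mathbb E_{P_{T+1}}[\ell(g(X;\theta),Y)]$ and letting $\widehat R_n(\theta)$ denote its empirical counterpart on the $n$ target samples, I would start from the standard identity
\[
\mathcal E_{T+1}= \underbrace{\bigl[R(\widehat\theta_{T+1})-\widehat R_n(\widehat\theta_{T+1})\bigr]}_{\text{(A)}}
+\underbrace{\bigl[\widehat R_n(\widehat\theta_{T+1})-\widehat R_n(\theta_{T+1}^{\star})\bigr]}_{\text{(B)}}
+\underbrace{\bigl[\widehat R_n(\theta_{T+1}^{\star})-R(\theta_{T+1}^{\star})\bigr]}_{\text{(C)}},
\]
in which (A) and (C) are generalization gaps and (B) is an empirical excess risk.

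\textbf{Bounding the three terms.} For (B), since $\widehat\theta_{T+1}=\phi^{-1}\!\bigl(\widehat z_{T+1,low}+\widehat z_{T+1,high}\bigr)$ with $\widehat z_{T+1,low}=\mathbf K\phi_{low}(\theta_{T,low})$ and $\widehat z_{T+1,high}=\phi_{high}(\theta_{T,high})$, I would write $\theta_{T+1}^{\star}=\phi^{-1}\!\bigl(z_{T+1,low}^{\star}+z_{T+1,high}^{\star}\bigr)$ for a latent preimage of the target optimum, chain the $L_\ell$-, $L_g$-, and $L_{dec}$-Lipschitz bounds pointwise, and then average over the sample, obtaining $|\text{(B)}|\le L_\ell L_g\,\|\widehat\theta_{T+1}-\theta_{T+1}^{\star}\|\le L_\ell L_g L_{dec}\bigl(\mathcal E_{low}+\mathcal E_{high}\bigr)$, where $\mathcal E_{low}:=\|\widehat z_{T+1,low}-z_{T+1,low}^{\star}\|$, $\mathcal E_{high}:=\|\widehat z_{T+1,high}-z_{T+1,high}^{\star}\|$, and the triangle inequality in the latent space splits the sum. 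For (A) and (C), both are at most $2\sup_{h\in\mathcal G}|R(h)-\widehat R_n(h)|$, and uniform convergence over $\mathcal G=\{X\mapsto g(X;\phi^{-1}(z))\}$ is exactly what makes (A) tractable despite $\widehat\theta_{T+1}$ being data-dependent. Symmetrization followed by Talagrand's contraction lemma (the loss is $L_\ell$-Lipschitz in $g$, and $g$ is $L_g$-Lipschitz in $\theta$) pulls the Lipschitz constants out of the Rademacher complexity of the loss class, leaving the assumed bound $C/\sqrt n$ on the Rademacher complexity of $\mathcal G$; McDiarmid's inequality then yields, with probability at least $1-\delta$, a bound of order $\mathcal O\!\bigl(L_\ell L_g(C/\sqrt n+\sqrt{\log(1/\delta)/n})\bigr)$. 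Summing (A)--(C) and taking a union bound over the two concentration events gives Eq.~\eqref{eq:gene_bound}.

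\textbf{Main obstacle.} The delicate step is the premise used in (B) that $\theta_{T+1}^{\star}$ admits a latent preimage $z_{T+1,low}^{\star}+z_{T+1,high}^{\star}$ under the shared decoder with a genuine low/high split, so that (i) no separate approximation-error term is needed and (ii) $\mathcal E_{low},\mathcal E_{high}$ really are the latent residuals the statement names. This is where the expressivity encoded in Assumption~\ref{ass:lipschitz} does the work; under only an $\epsilon$-approximate preimage one picks up an additional $L_\ell L_g L_{dec}\,\epsilon$ that must be folded into $\mathcal E_{high}$ or reported separately. A secondary care point is invoking contraction so that $L_\ell$ (and $L_g$, if the assumed bound $C/\sqrt n$ refers to a function class prior to composition with $g$) are extracted once rather than double-counted; and I should note that if $\widehat\theta_{T+1}$ is independent of the $n$ target samples, (A) could instead be handled by a conditional Hoeffding bound, though keeping the uniform-convergence route leaves the argument robust to mild reuse of target data.
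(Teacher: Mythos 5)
Your proposal is correct and reaches the paper's bound, but it is organized differently from the paper's own argument. The paper bounds the \emph{population} excess risk directly: it applies the $L_\ell$- and $L_g$-Lipschitz properties inside the expectation to get $\mathcal E_{T+1}\le L_\ell L_g\|\widehat\theta_{T+1}-\theta^{\star}_{T+1}\|$, pushes this through the $L_{dec}$-Lipschitz decoder and the triangle inequality in latent space to obtain $L_\ell L_g L_{dec}(\mathcal E_{low}+\mathcal E_{high})$, and then \emph{appends} the Rademacher/concentration term as a separate "estimation error" contribution, describing the final inequality as having an approximation-plus-estimation structure rather than deriving both terms from a single decomposition. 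You instead use the standard three-term ERM split $(A)+(B)+(C)$ through the empirical risk $\widehat R_n$, controlling the middle term by the identical Lipschitz chain and the two generalization gaps by uniform convergence with symmetrization, contraction, and McDiarmid. What your route buys is a clean accounting of \emph{why} both terms must appear and a correct handling of the data-dependence of $\widehat\theta_{T+1}$ (or, as you note, independence from the target sample, in which case a conditional Hoeffding bound suffices); what the paper's route buys is brevity, at the cost that its first step already bounds the full population excess risk, making the added estimation term logically redundant in its own derivation. Your flagged obstacle is also real and worth keeping: the paper silently assumes $\theta^{\star}_{T+1}=\phi^{-1}(z^{\star}_{T+1})$ with an exact low/high latent split, i.e.\ that the optimal target parameter lies in the decoder's range; absent that, an extra approximation term $L_\ell L_g L_{dec}\,\epsilon$ must be carried, exactly as you describe. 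One small caution: make sure the contraction step extracts $L_\ell L_g$ only once, since Assumption~\ref{ass:rademacher} already bounds the complexity of the composed class $\mathcal G=\{X\mapsto g(X;\phi^{-1}(z))\}$ rather than of a pre-composition class.
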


\textbf{Discussion.}
Theorem~\ref{thm:generalization} decomposes the excess risk into two primary factors: 
% The first, \textit{Parameter Prediction Error}, $L_\ell L_g L_{dec}(\mathcal{E}_{low} + \mathcal{E}_{high})$, arises from the discrepancy between predicted and optimal parameters. This term is controlled by:

\textit{1) Parameter Prediction Error:} The first term $L_\ell L_g L_{dec}(\mathcal{E}_{low} + \mathcal{E}_{high})$, where Koopman stability controls
$\mathcal{E}_{low}$ via Lemma~\ref{lemma:koopman} and
temporal smoothness controls
$\mathcal{E}_{high}$ via Lemma~\ref{lemma:highfreq-map};
\begin{lemma}[Koopman Stability]
\label{lemma:koopman}
For any initial low-frequency error $e_{t_0}$ and horizon
$h=T+1-t_0$, $ \|\mathbf{K}^{h}e_{t_0}\|\le C_K(1+h)^{q-1}\|e_{t_0}\|$, where $q$ is the size of the largest Jordan block of  $\mathbf{K}$ and $C_K=\kappa(V)$ for the Jordan basis $V$. If 
 $\mathbf{K}$'s spectral radius $\rho(\mathbf{K})<1$, the bound sharpens to $C_K\rho(\mathbf{K})^h|e_{t_0}|$. 
\end{lemma}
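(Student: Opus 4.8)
\textbf{Proof plan for Lemma~\ref{lemma:koopman}.}
The plan is to reduce the claim to a classical estimate on powers of a matrix via its Jordan canonical form, and then control the combinatorial growth contributed by non-trivial Jordan blocks. The Koopman extrapolation error propagates linearly through $\mathbf{K}$, so $e_{t_0+h}=\mathbf{K}^{h}e_{t_0}$ and it suffices to bound the operator norm $\|\mathbf{K}^{h}\|$. Write $\mathbf{K}=VJV^{-1}$ with $J=\mathrm{diag}(J_1,\dots,J_p)$, where $J_i=\lambda_i I_{q_i}+N_i$ is a Jordan block of size $q_i$, $N_i$ is the nilpotent super-diagonal shift with $N_i^{q_i}=0$ and $\|N_i\|\le 1$ in the standard Jordan basis, and $q=\max_i q_i$. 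Then $\mathbf{K}^{h}=VJ^{h}V^{-1}$ yields $\|\mathbf{K}^{h}e_{t_0}\|\le\|V\|\,\|V^{-1}\|\,\|J^{h}\|\,\|e_{t_0}\|=\kappa(V)\,\|J^{h}\|\,\|e_{t_0}\|$, and since $J$ is block diagonal, $\|J^{h}\|=\max_i\|J_i^{h}\|$.

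Next I would expand each block power. Because $\lambda_i I$ and $N_i$ commute, the binomial theorem gives $J_i^{h}=\sum_{k=0}^{\min(h,\,q_i-1)}\binom{h}{k}\lambda_i^{\,h-k}N_i^{k}$, hence $\|J_i^{h}\|\le\sum_{k=0}^{q_i-1}\binom{h}{k}|\lambda_i|^{\,h-k}$. In the regime of a non-expansive Koopman operator, $\rho(\mathbf{K})\le 1$, so $|\lambda_i|^{\,h-k}\le 1$ and the bound collapses to $\sum_{k=0}^{q-1}\binom{h}{k}\le(1+h)^{q-1}$, an elementary inequality (equality at $q=2$; in general, induction on $q$ using $\binom{h}{m}\le h^{m}\le h(1+h)^{m-1}$). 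Therefore $\|J^{h}\|\le(1+h)^{q-1}$, and absorbing nothing beyond $\kappa(V)$ we get $\|\mathbf{K}^{h}e_{t_0}\|\le C_K(1+h)^{q-1}\|e_{t_0}\|$ with $C_K=\kappa(V)$, which is the first assertion.

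For the sharpened statement, suppose $\rho(\mathbf{K})<1$. Factoring $\rho(\mathbf{K})^{h}$ out of the block estimate gives $\|J_i^{h}\|\le\rho(\mathbf{K})^{h}\sum_{k=0}^{q-1}\binom{h}{k}|\lambda_i|^{-k}$. When $\mathbf{K}$ is diagonalizable every $q_i=1$, the sum equals $1$, and one obtains exactly $\|\mathbf{K}^{h}e_{t_0}\|\le C_K\rho(\mathbf{K})^{h}\|e_{t_0}\|$ with $C_K=\kappa(V)$. In the defective case the residual factor is a degree-$(q-1)$ polynomial in $h$, which is $O\bigl((\tilde\rho/\rho(\mathbf{K}))^{h}\bigr)$ for any fixed $\tilde\rho\in(\rho(\mathbf{K}),1)$; hence $\|\mathbf{K}^{h}e_{t_0}\|\le C_{K,\tilde\rho}\,\tilde\rho^{h}\|e_{t_0}\|$ with a horizon-independent constant. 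Equivalently, one may invoke Gelfand's formula to choose $h_0$ with $\|\mathbf{K}^{h_0}\|<1$ and chain submultiplicativity to recover geometric decay.

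The only delicate point — the main obstacle — is pinning the prefactor down cleanly: one must (i) use the Jordan basis normalization so that $\|N_i\|\le 1$ and the constant genuinely reduces to $\kappa(V)$; (ii) verify $\sum_{k\le q-1}\binom{h}{k}\le(1+h)^{q-1}$ uniformly in $h$, including the trivial small-$h$ range where the binomial sum truncates at $k=h$; and (iii) reconcile the literal form $C_K\rho(\mathbf{K})^{h}$ in the sharpened bound with the defective case, which strictly requires either the diagonalizability hypothesis or an arbitrarily small inflation of the decay rate to $\tilde\rho$. Everything else is routine bookkeeping on the Jordan decomposition.
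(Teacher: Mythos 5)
Your proof is correct and follows essentially the same route as the paper's: both pass to the Jordan decomposition $\mathbf{K}=VJV^{-1}$, bound $\|\mathbf{K}^{h}e_{t_0}\|\le\kappa(V)\,\|J^{h}\|\,\|e_{t_0}\|$, and control $\|J^{h}\|$ by $(1+h)^{q-1}$ under $\rho(\mathbf{K})\le 1$, with the diagonalizable case yielding the geometric bound. The only differences are refinements: you derive the block-power estimate explicitly via the binomial expansion, which pins the constant to exactly $\kappa(V)$ (the paper cites a standard result and absorbs an extra factor $C''$), and you correctly flag that the sharpened bound $C_K\rho(\mathbf{K})^{h}$ as literally stated needs diagonalizability or an inflated rate $\tilde\rho$ --- a caveat the paper's proof likewise only resolves in the diagonalizable case.
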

\begin{lemma}[High-Frequency Smoothness Bias]
\label{lemma:highfreq-map}
Minimising $\mathcal{R}_{{high}}$ is equivalent to maximum-a-posteriori
estimation the Gaussian random-walk prior
$z_{t+1,{high}} = z_{t,{high}} + \xi_t,\; \xi_t\sim\mathcal{N}(0,\sigma^{2}I)$,
with precision $\lambda = 1/(2\sigma^{2})$.
\end{lemma}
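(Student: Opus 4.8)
The plan is to make precise the Bayesian reading of the latent high-frequency sequence $\{z_{t,{high}}\}_{t=1}^{T}$ and to show that the penalty $\mathcal{R}_{{high}}$ coincides, up to an additive constant and a multiplicative factor, with the negative log-density of the stated random-walk prior. First I would treat $z_{1:T,{high}} := \{z_{t,{high}}\}_{t=1}^{T}$ as the latent object to be estimated and posit the generative model implied by the lemma: a (possibly improper, flat) prior on the initial state $z_{1,{high}}$ together with Gaussian transitions $z_{t+1,{high}}\mid z_{t,{high}}\sim\mathcal{N}(z_{t,{high}},\sigma^{2}I)$ for $t=1,\dots,T-1$. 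By the Markov structure of the random walk, the joint prior factorizes as $p(z_{1:T,{high}}) = p(z_{1,{high}})\prod_{t=1}^{T-1} p(z_{t+1,{high}}\mid z_{t,{high}})$.

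The key computational step is to evaluate the negative log-prior. Each transition density is $p(z_{t+1,{high}}\mid z_{t,{high}}) = (2\pi\sigma^{2})^{-m/2}\exp\!\bigl(-\tfrac{1}{2\sigma^{2}}\|z_{t+1,{high}}-z_{t,{high}}\|_2^{2}\bigr)$, so taking logarithms and summing over $t$ collapses the product into $-\log p(z_{1:T,{high}}) = \mathrm{const} + \tfrac{1}{2\sigma^{2}}\sum_{t=1}^{T-1}\|z_{t+1,{high}}-z_{t,{high}}\|_2^{2}$, where the constant absorbs the normalizing factors and the flat initial-state prior. Recognizing the sum as exactly $\mathcal{R}_{{high}}$ from Eq.~\eqref{eq:loss_highfre}, this gives $-\log p(z_{1:T,{high}}) = \mathrm{const} + \lambda\,\mathcal{R}_{{high}}$ with $\lambda = 1/(2\sigma^{2})$, matching the precision asserted in the statement.

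To complete the equivalence I would write the MAP objective as the negative log-posterior, $-\log p(z_{1:T,{high}}\mid\text{data}) = -\log p(\text{data}\mid z_{1:T,{high}}) - \log p(z_{1:T,{high}})$, identify the data-fidelity terms ($\mathcal{L}_{{task}}$ and $\mathcal{L}_{{rec}}$, under a Gaussian observation model) with the negative log-likelihood, and observe that the only prior-induced contribution is precisely $\lambda\,\mathcal{R}_{{high}}$. Hence minimizing the regularized objective is equivalent to MAP estimation under the random-walk prior, with the regularization weight acting as the prior precision.

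The main obstacle is conceptual rather than computational: stating exactly what ``equivalent'' means. The delicate points are (i) the initial-state prior, which I would take to be improper/flat so that it contributes only an ignorable additive constant and does not inject an extra quadratic anchor; (ii) confirming that the isotropic, identity-covariance noise is what yields a plain squared-Euclidean penalty, since an anisotropic $\Sigma$ would instead produce a Mahalanobis norm, so that the scalar $\lambda = 1/(2\sigma^{2})$ is correctly matched (the factor $1/2$ arising from the Gaussian exponent rather than from the raw precision $1/\sigma^{2}$); and (iii) being explicit that matching the data-fidelity terms to a genuine log-likelihood presupposes a Gaussian observation model, the correspondence otherwise holding at the level of the regularizer being the negative log-prior. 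Once these modeling choices are fixed, the remaining algebra is immediate.
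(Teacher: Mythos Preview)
Your proposal is correct and follows essentially the same route as the paper: factorize the joint prior via the Markov property, compute the negative log-density of the Gaussian transitions, and identify the resulting quadratic sum with $\mathcal{R}_{high}$ scaled by $\lambda=1/(2\sigma^{2})$. If anything, you are more careful than the paper, which simply declares the initial-state term $-\log p(z_{1,high})$ to be ``constant'' without saying that this presumes a flat prior, and which does not spell out the role of a likelihood in the MAP objective; your points (i)--(iii) make these modeling choices explicit where the paper leaves them implicit.
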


\textit{2) Estimation Error:} The $\mathcal{O}(\cdot)$ term is a standard learning-theoretic bound related to hypothesis class complexity ($C$) and finite samples ($n$).
Thus, FreKoo's distinct mechanisms, i.e., stable Koopman extrapolation for $\mathcal{E}_{low}$ and regularized smoothing for $\mathcal{E}_{high}$, directly address identifiable components of the generalization bound, promoting robust performance in evolving environments. A more precise statement and proof are provided in Appendix~\ref{app:theory_details}.

\section{Experiments}
\label{sec:experiments}
\subsection{Experiment Settings}
\label{sec:exp_settings}
\textbf{Datasets.}
Following~\cite{bai2023temporal}, we evaluate FreKoo on seven datasets with various drift types. In classification, the synthetic Rotated‑Moons and Rot‑MNIST benchmarks create incremental drift through steadily increasing rotation angles, whereas the real‑world streams ONP, Shuttle, and Elec2 exhibit incremental, periodical or unknown drifts, respectively. For regression, HousePrices and ApplianceEnergy also reflect a real-world non‑stationary. 
These diverse datasets, featuring various drifts and real-world uncertainties, form a comprehensive test bed (details in Appendix~\ref{appendix:datasets}).

\textbf{Baselines.}
We compare our method against four groups of baselines. 1) Time-Agnostic baselines: Offline, LastDomain and IncFinetune. 2) Continuous Domain Adaptation (CDA): CDOT~\cite{ortiz2019cdot}, CIDA~\cite{wang2020continuously}. 3) TDG: GI~\cite{nasery2021training}, LSSAE~\cite{qin2022generalizing}, DDA~\cite{zeng2023foresee}, DRAIN~\cite{bai2023temporal}; 4) Continuous TDG: EvoS~\cite{xie2024evolving},  Koodos~\cite{cai2024continuous}. Details can be found in Appendix~\ref{appendix:baselines}. All results were averaged over 5 independent runs and we report the mean and standard deviation. The details of model implementations and hyperparameters can be found in Appendix~\ref{appendix:imple}.
% \footnote{Code is available at: https://anonymous.4open.science/r/FreKoo-28E4/}
\begin{table}[h]
  \caption{Performance comparisons in terms of misclassification error (\%) for classification and mean absolute error (MAE) for regression (both smaller the better).}
    \centering
  \resizebox{\textwidth}{!}{
    \begin{tabular}{@{}l|ccccc|cc}
    \toprule
    \multirow{2}[4]{*}{Methods} & \multicolumn{5}{c|}{Classification} & \multicolumn{2}{c}{Regression} \\
\cmidrule{2-8}          & 2-Moons & Rot-MNIST & ONP   & Shuttle & Elec2 & House & Appliance \\
    \midrule
    Offline & 22.4 $\pm$ 4.6 & 18.6 $\pm$ 4.0 & 33.8 $\pm$ 0.6 & 0.77 $\pm$ 0.10 & 23.0 $\pm$ 3.1 & 11.0 $\pm$ 0.36 & 10.2$\pm$1.1 \\
    LastDomain & 14.9 $\pm$ 0.9 & 17.2 $\pm$ 3.1 & 36.0 $\pm$ 0.2 & 0.91 $\pm$ 0.18 & 25.8 $\pm$ 0.6 & 10.3 $\pm$ 0.16 & 9.1 $\pm$ 0.7 \\
    IncFinetune & 16.7 $\pm$ 3.4 & 10.1 $\pm$ 0.8 & 34.0 $\pm$ 0.3 & 0.83 $\pm$ 0.07 & 27.3 $\pm$ 4.2 & 9.7 $\pm$ 0.01 & 8.9 $\pm$ 0.5 \\
    CDOT~\cite{ortiz2019cdot}  & 9.3 $\pm$ 1.0 & 14.2 $\pm$ 1.0 & 34.1 $\pm$ 0.0 & 0.94 $\pm$ 0.17 & 17.8 $\pm$ 0.6 & -     & - \\
    CIDA~\cite{wang2020continuously}  & 10.8 $\pm$ 1.6 & 9.3 $\pm$ 0.7 & 34.7 $\pm$ 0.6 & -     & 14.1 $\pm$ 0.2 & 9.7 $\pm$ 0.06 & 8.7 $\pm$ 0.2 \\
    GI~\cite{nasery2021training}    & 3.5 $\pm$ 1.4 & 7.7 $\pm$ 1.3 & 36.4 $\pm$ 0.8 & 0.29 $\pm$ 0.05 & 16.9 $\pm$ 0.7 & 9.6 $\pm$ 0.02 & 8.2 $\pm$ 0.6 \\
    LSSAE~\cite{qin2022generalizing} & 9.9 $\pm$ 1.1 & 9.8 $\pm$ 3.6 & 38.8 $\pm$ 1.1 & 0.22 $\pm$ 0.01 & 16.1 $\pm$ 1.4 & -     & - \\
    DDA~\cite{zeng2023foresee}   & 9.7 $\pm$ 1.5 & 7.6 $\pm$ 0.7 & 34.0 $\pm$ 0.3 & \underline{0.21 $\pm$ 0.02} & 12.8 $\pm$ 1.1 & 9.5 $\pm$ 0.12 & 6.1 $\pm$ 0.1 \\
    DRAIN~\cite{bai2023temporal} & 3.2 $\pm$ 1.2 & 7.5 $\pm$ 1.1 & 38.3 $\pm$ 1.2 & 0.26 $\pm$ 0.05 & 12.7 $\pm$ 0.8 & 9.3 $\pm$ 0.14 & 6.4 $\pm$ 0.4 \\
    EvoS~\cite{xie2024evolving}  & 3.0 $\pm$ 0.4 & 7.3 $\pm$ 0.6 & 35.4 $\pm$ 0.2 & 0.23 $\pm$ 0.01 & 11.8 $\pm$ 0.5 & 9.8 $\pm$ 0.10 & 7.2 $\pm$ 0.1 \\
    Koodos~\cite{cai2024continuous}	& \underline{1.3 $\pm$ 0.4}	&7.0 $\pm$ 0.3	&\underline{33.5 $\pm$ 0.4}	&0.24 $\pm$ 0.04	&-	&\textbf{8.8 $\pm$ 0.19}	&\underline{4.8 $\pm$ 0.3}\\
    \midrule
    FreKoo (ours) &  \textbf{1.0 $\pm$ 0.3}    &    \textbf{6.9 $\pm$ 0.7}    &   \textbf{32.3 $\pm$ 0.3}    &  \textbf{0.20 $\pm$ 0.02 }     &   \textbf{9.2 $\pm$ 0.7 }    &  \underline{9.0 $\pm$ 0.11}      &  \textbf{4.0 $\pm$ 0.1}\\
    \bottomrule
    \end{tabular}}
  \label{tab:overall_results}%
\end{table}
\subsection{Results and Analysis}
As shown in Table~\ref{tab:overall_results}, our proposed FreKoo achieves state-of-the-art performance on six out of seven TDG benchmarks, significantly surpassing prior methods across both classification and regression tasks.
FreKoo shows notable improvements on datasets characterized by synthetic incremental drifts, such as 2-Moons and Rot-MNIST, where it attains the lowest error rates. Crucially, it excels particularly on benchmarks involving periodic drifts, like Elec2 and Appliance, achieving marked effectiveness with error rates of 9.2\% and 4.0 MAE, respectively. This highlights the strength of our frequency-domain approach in capturing and leveraging underlying periodic dynamics, solving \emph{Challenge 1}.  Furthermore, FreKoo's consistent leading performance across diverse real-world datasets (e.g., ONP, Elec2, Shuttle, Appliance) underscores its robustness against overfitting to domain-specific artifacts or local noise. This resilience stems from its dual mechanism: employing the Koopman operator to model stable low-frequency dynamics (trends/periodicity) while simultaneously regularizing high-frequency components to mitigate noise and prevent adaptation to transient, domain-specific characteristics, solving \emph{Challenge 2}. Therefore, these results validate the efficacy of FreKoo's frequency-Koopman design for robustly handling complex concept drifts, while maintaining generalization capability in real-world dynamic environments. We also provide the visualization of decision boundary and analysis compared with SOTA methods in Appendix~\ref{app:Qualitative-Analysis}.

\begin{wraptable}{r}{0.55\textwidth}
\caption{Ablation study. Comparison between FreKoo and its variants across two datasets for classification and one dataset for regression.}
\centering
\begin{tabular}{@{}l | c c c@{}} % l=left align, c=center align; @{} removes extra space
    \toprule
    Variants & 2-Moons & Elec2 & Appliance \\
    \midrule
    % \multicolumn{4}{l}{\textit{Architectural Ablations}} \\ % Optional section separator
    w/o Koop. &  15.3 $\pm $ 1.7  & 28.9 $\pm$ 2.4  & 9.4 $\pm$ 0.7\\
     w/o Freq. & 2.6 $\pm $ 0.8 & 14.2 $\pm$ 1.9  & 5.2 $\pm$ 0.4 \\
    \midrule
    % \multicolumn{4}{l}{\textit{Loss Ablations}} \\ % Optional section separator
     w/o $\mathcal{L}_{\text{rec}}$ & 9.1 $\pm$ 1.7 &  12.2 $\pm$ 1.0 & 4.3 $\pm$ 0.4 \\
     w/o $\mathcal{L}_{\text{koop}}$ & 6.7 $\pm$ 0.9  & 10.0 $\pm$ 0.8  & 4.2 $\pm$ 0.6 \\
     w/o $\mathcal{R}_{\text{high}}$ & 2.3 $\pm$ 0.6 & 10.5 $\pm$ 1.7  & 4.2 $\pm$ 0.3 \\
    \midrule
    FreKoo & \textbf{1.0 $\pm$ 0.3} & \textbf{9.2 $\pm$ 0.7} & \textbf{4.0 $\pm$ 0.1} \\ % Leave cells empty, ready for results
    \bottomrule
  \end{tabular}
  \label{table:ablation}
\end{wraptable}

\subsection{Ablation Study}
Our ablation study (Table~\ref{table:ablation}) on diverse drift types (e.g., incremental on Rotated Moons, periodic on Elec2, Appliance) systematically evaluates FreKoo's core components. Removing the Koopman operator (\textbf{w/o Koop.}) severely degrades performance, underscoring its criticality for modeling parameter dynamics. Similarly, omitting frequency decomposition (\textbf{w/o Freq.}) significantly harms performance on periodic datasets (Elec2, Appliance), confirming spectral analysis is vital for disentangling stable low-frequency trends from high-frequency noise, especially for complex periodic patterns.

Analyzing training objectives (Eq.\eqref{eq:loss_total}) by removing individual loss terms further highlights component importance. Omitting parameter reconstruction (\textbf{w/o $\mathcal{L}_{rec}$}, Eq.\eqref{eq:loss_rec}) hurts prediction fidelity. Excluding Koopman consistency (\textbf{w/o $\mathcal{L}_{koop}$}, Eq.~\eqref{eq:loss_koop}) impairs stable trend extrapolation by weakening the learned linear dynamics ($\mathbf{K}$). Disabling high-frequency smoothing (\textbf{w/o $\mathcal{R}_{high}$}, Eq.~\eqref{eq:loss_highfre}) increases noise sensitivity. These results collectively affirm that FreKoo's spectral decomposition, Koopman modeling, high-frequency regularization, and associated losses are all essential for robust temporal domain generalization.

\subsection{Periodicity Modeling Performance}
To rigorously evaluate long-term periodicity handling, we extended the 10-domain rotating 2-Moons benchmark (0-9 domains, 18° increments) into a 37-domain sequence. This sequence embeds recurring cycles following the conceptual pattern \{0$\rightarrow$9$\rightarrow$0$\rightarrow$9$\rightarrow$0\}. Training on the first 36 domains and testing on the 37th forces models to exploit long-range pattern recurrence. We compare FreKoo against GI~\cite{nasery2021training}, DRAIN~\cite{bai2023temporal}, and Koodos~\cite{cai2024continuous}. As shown in Figure~\ref{fig:p-moons_results}, FreKoo significantly outperforms these methods on this periodic benchmark. This superior result demonstrates FreKoo's specific ability to capture and utilize the explicitly embedded long-term evolutions, validating its robustness and enhanced generalization capabilities in real-world data streams with complex drifts. 

In addition, Figure~\ref{fig:qualitative} qualitatively illustrates FreKoo's parameter evolution: the low-frequency component ($\Theta_{\text{low}}$) effectively captures the underlying periodic trend of the raw parameters ($\Theta$), while the high-frequency component ($\Theta_{\text{high}}$) isolates transient variations. The reconstructed parameter ($\hat{\Theta}$) tracks these long-term evolutions smoothly. Such targeted spectral separation and dual modeling enable FreKoo to learn more stable parameter evolutions, mitigating overfitting to transient details and fostering improved generalization.
A more extensive qualitative analysis is presented in Appendix~\ref{app:freqAnalysis}.

% In addition, we also provide a qualitative analysis of the evolutions of raw parameters $\Theta$, low-frequency component $\Theta_{low}$, high-frequency component $\Theta_{high}$ and reconstructed parameters $\hat{\Theta}$ in Appendix~\ref{app:freqAnalysis}.

\begin{wrapfigure}{r}{0.75\textwidth}
    \centering
    \begin{subfigure}[t]{0.37\textwidth}
        \centering
      \includegraphics[width=\linewidth]{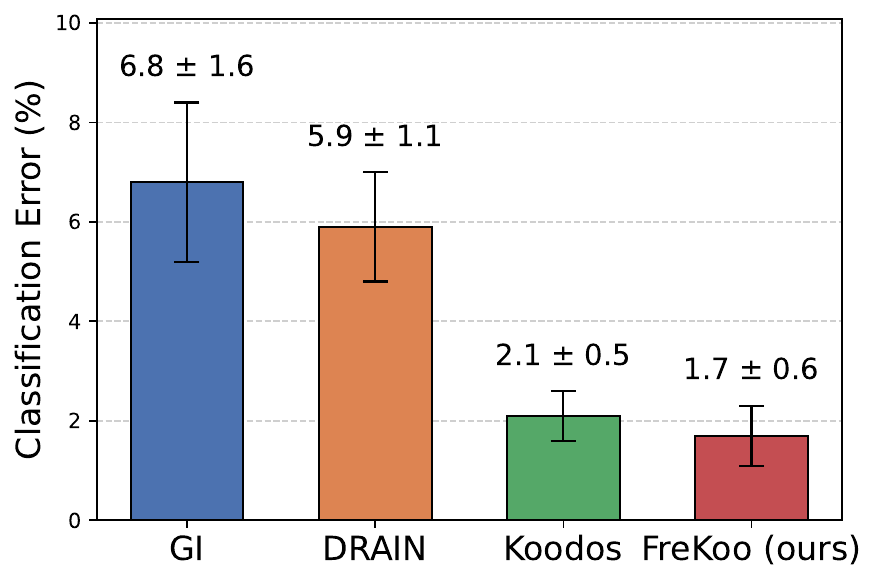}
        \caption{Quantitative performance}
        \label{fig:p-moons_results}
    \end{subfigure}
    \begin{subfigure}[t]{0.37\textwidth}
        \centering
        \includegraphics[width=\linewidth]{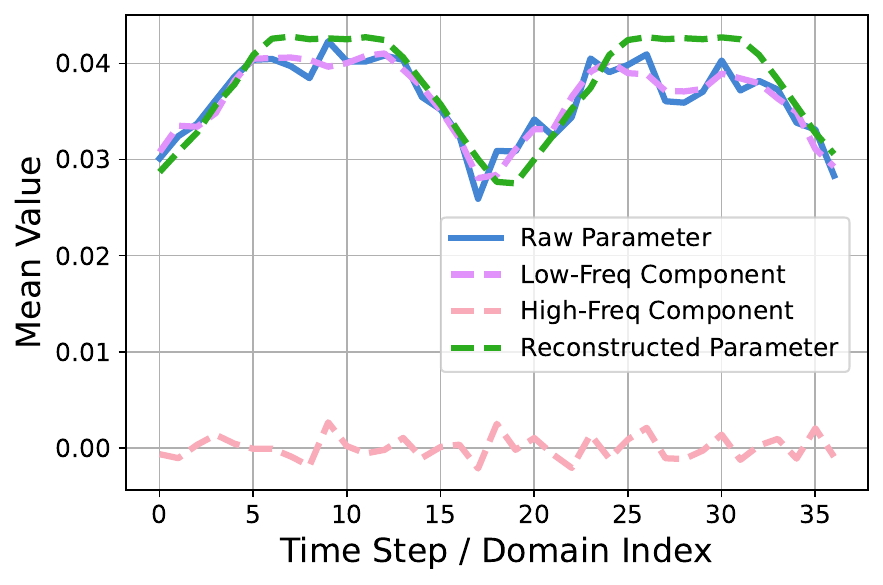}
        \caption{Qualitative performance}
        \label{fig:qualitative}
    \end{subfigure}
    \caption{Periodicity modeling performance on P-Moons dataset.}
    \label{fig:causal}
\end{wrapfigure}

\subsection{Sensitivity Analysis}
We analyzed sensitivity to the spectral energy preservation ratio $\tau$ and loss weights $\alpha, \beta, \gamma$ on 2-Moons with synthesized incremental drift and Appliance with real-world periodicity and uncertainties.
As shown in Figure~\ref{fig:tao}, higher $\tau$ benefits 2-Moons, while an intermediate $\tau$ is optimal for Appliance.
This underscores $\tau$'s crucial role in balancing the preservation of essential low-frequency dynamics against filtering disruptive high-frequency noise, particularly for complex noisy real-world data.
For $\alpha, \beta, \gamma$, grid search over [0.01, 0.1, 1, 10, 100] (Figure~\ref{fig:alpha}-~\ref{fig:gama}) confirmed that optimal performance hinges on balancing reconstruction accuracy, Koopman stability, and high-frequency smoothing.
% We performed sensitivity analysis on the spectral energy preservation ratio $\tau$ and loss parameters $\alpha, \beta, \gamma$ (Eq.~\eqref{eq:loss_total}) using the 2-Moons and Appliance datasets. As shown in Figure~\ref{fig:tao}, performance on 2-Moons improves with higher $\tau$. This is because 2-Moons largely lacks the domain-specific noise or non-IID artifacts, only with smooth synthetic incremental drift. Conversely, on the Appliance dataset with real-world periodicity and noise, an optimal intermediate $\tau$ is crucial. This contrast leads to a key insight: for datasets with clean, primarily incremental drift, larger $\tau$ is beneficial; however, for complex real-world data with mixed drifts and noise, selecting an appropriate intermediate $\tau$ is necessary to filter out disruptive high-frequency variations and enhance generalization. This highlights FreKoo's core mechanism: $\tau$ must balance preserving essential low-frequency dynamics while filtering high-frequency noise and domain-specific artifacts via spectral decomposition, validating its importance for robust handling of complex drifts. For hyperprameters $\alpha, \beta, \gamma$, grid search over [0.01, 0.1, 1, 10, 100] confirmed that optimal performance requires balancing overall reconstruction accuracy, Koopman dynamics stability, and high-frequency smoothing, as shown in Figures~\ref{fig:alpha}-~\ref{fig:gama}.
\begin{figure}[t]
    \centering
    \begin{subfigure}[t]{0.24\textwidth}
        \includegraphics[width=\linewidth]{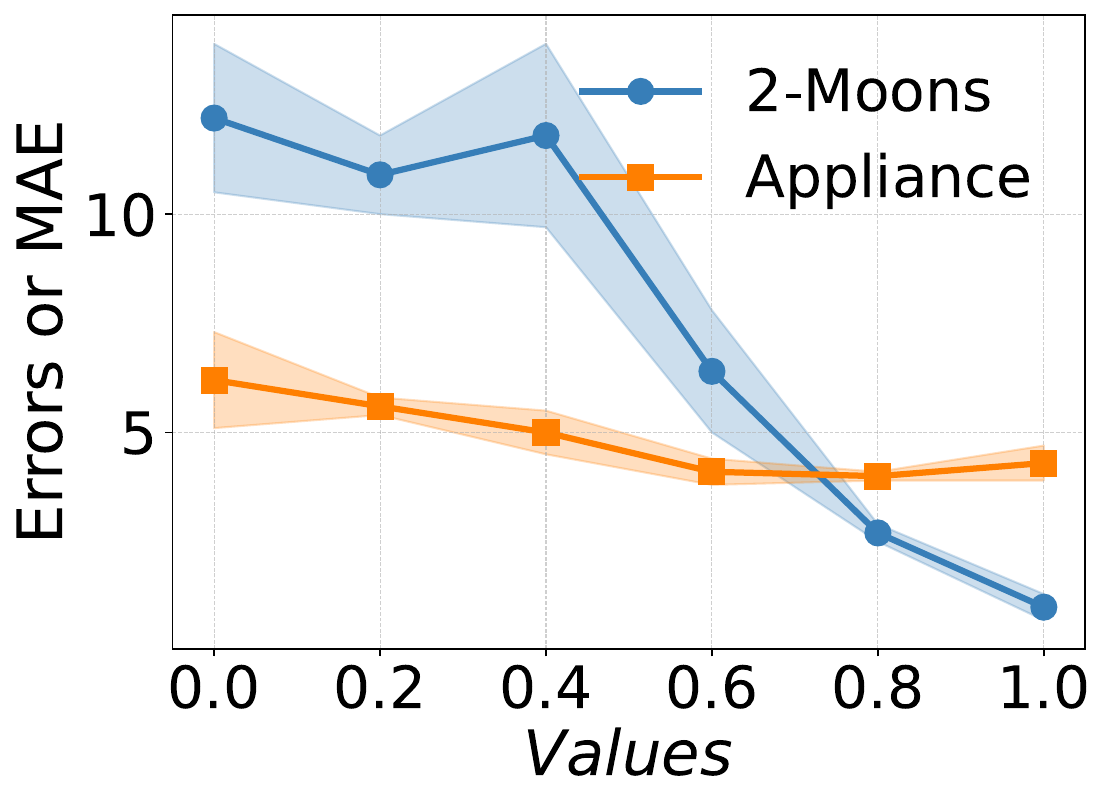}
        \caption{Sensitivity to $\tau$}
        \label{fig:tao}
    \end{subfigure}
    \begin{subfigure}[t]{0.24\textwidth}
        \includegraphics[width=\linewidth]{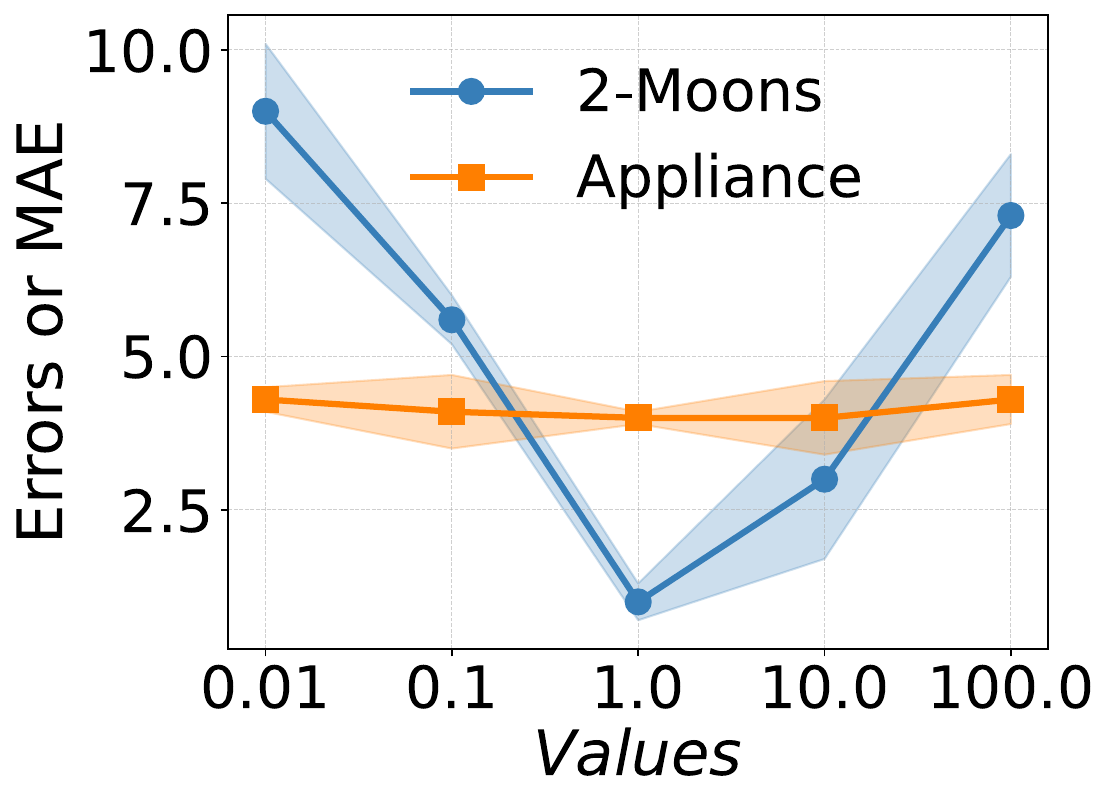}
        \caption{Sensitivity to $\alpha$}
        \label{fig:alpha}
    \end{subfigure}
    \begin{subfigure}[t]{0.24\textwidth}
        \includegraphics[width=\linewidth]{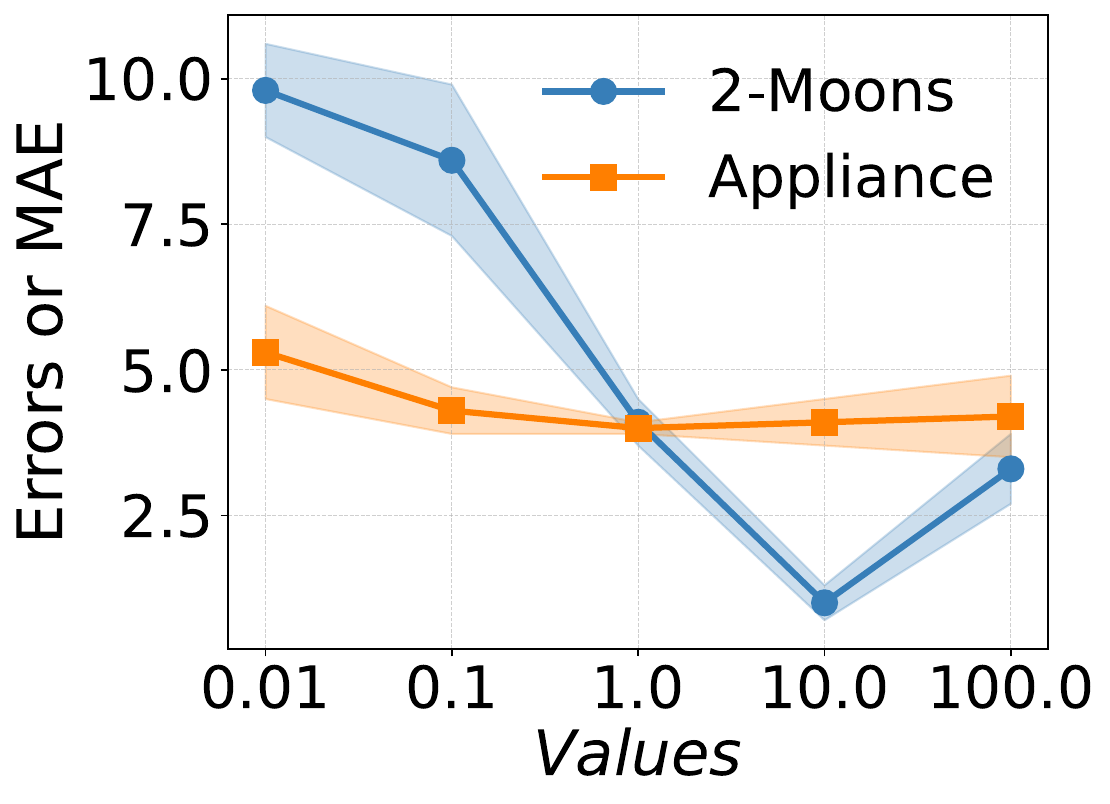}
        \caption{Sensitivity to $\beta$}
        \label{fig:beta}
    \end{subfigure}
    \begin{subfigure}[t]{0.24\textwidth}
        \includegraphics[width=\linewidth]{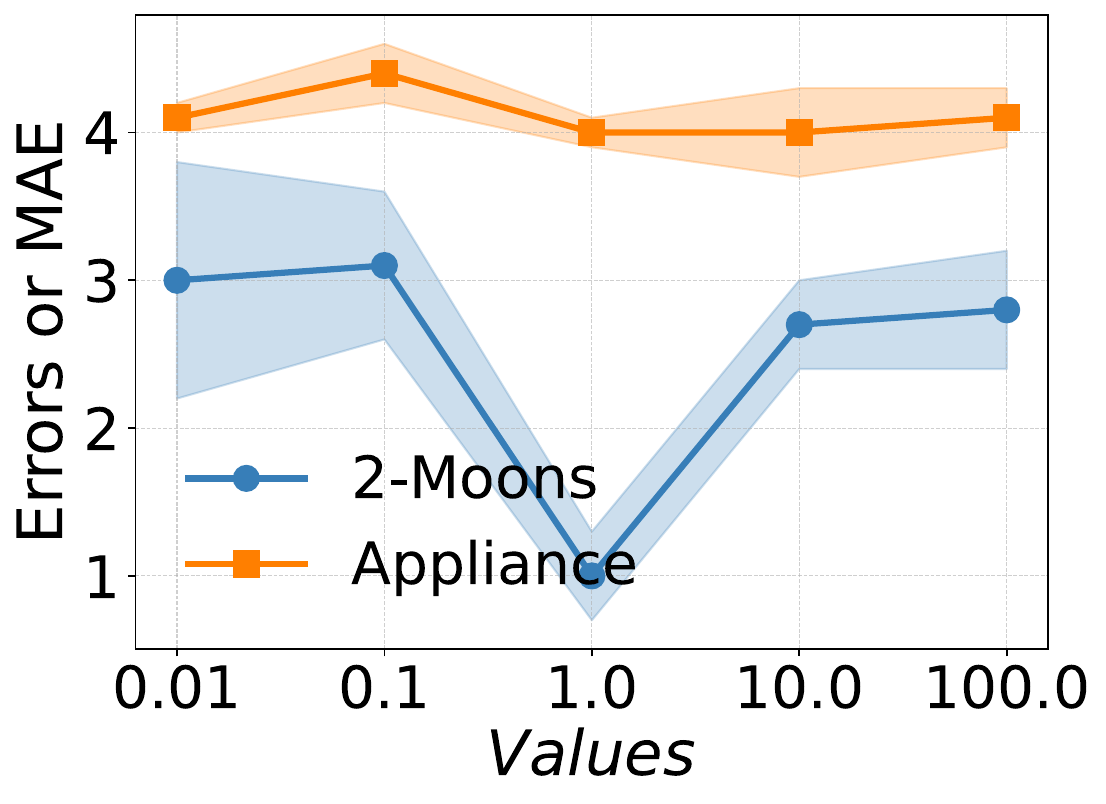}
        \caption{Sensitivity to $\gamma$}
        \label{fig:gama}
    \end{subfigure}
    \caption{Parameters sensitivity on 2-Moons and Appliance datasets.}
    \label{fig:para}
\end{figure}
\section{Conclusion and Future Works}
\label{sec:con-lim}
This paper tackled key challenges of TDG, i.e., modeling long-term periodicity and achieving robustness against domain-specific uncertainties. Our proposed FreKoo introduces a novel frequency-aware perspective. It analyzes parameter trajectories in the frequency domain, disentangling low-frequency dynamics modeled via Koopman operator from high-frequency noise smoothed via targeted regularization. Both theoretical analysis and extensive experiments validate FreKoo's effectiveness, demonstrating its significant generalization capabilities and robustness under complex concept drifts.

Future directions include exploring adaptive mechanisms for determining the optimal frequency separation threshold, potentially learning it from data rather than relying on a fixed heuristic. Furthermore, extending FreKoo to fully online or incremental learning scenarios, where models adapt continuously as new data arrives, presents a valuable avenue for practical applications.

% We introduced FreKoo, a novel framework adopting a frequency-aware perspective. FreKoo disentangling parameter trajectories into disentangling low-frequency dynamics (subsequently modeled via the Koopman operator for stable, periodic extrapolation) from high-frequency noise (which is smoothed via targeted regularization). Both theoretical analysis and extensive experiments validate FreKoo's effectiveness, demonstrating significant improvements over state-of-the-art methods, particularly under complex concept drifts involving periodicity and noise.

% \clearpage
% \newpage

\begin{ack}
The work was supported by the Australian Research Council (ARC) under Laureate project FL190100149.
\end{ack}
\bibliographystyle{nips.bst}
\bibliography{ref}

%%%%%%%%%%%%%%%%%%%%%%%%%%%%%%%%%%%%%%%%%%%%%%%%%%%%%%%%%%%%
\clearpage
\newpage
\appendix

\section*{Appendix}
\section{Additional Related Works}
\label{app:adrelatedworks}

\textbf{Domain Adaptation/Generalization.} Domain Adaptation (DA) requires access to both source and target domain data during training, employing methods like domain-invariant learning, domain mapping~, and ensemble approaches to minimize domain discrepancy~\cite{li2024comprehensive,yu2024Fuzzy,liu2024boosting}. Domain Generalization (DG) aims to train a model on multiple source domains to generalize to unseen target domains without access to target data, using techniques such as data augmentation, representation learning, and meta-learning~\cite{khoee2024domain,wang2024towards,peng2024advancing}. Both DG and DA frameworks, by treating domains as independent entities, often fail to account for the smooth evolutionary patterns present in time-ordered domain sequences.

\textbf{Real-world Concept Drift.} 
Concept drift signifies a change in the underlying data distribution over time, formally occurring between time $t$ and $t+1$ if the joint distribution $P_{t+1}(X, y) \neq P_t(X, y)$. This non-stationarity poses a fundamental challenge, requiring models to adapt dynamically to maintain predictive performance and reliability~\cite{lu2018learning}. While concept drift manifests in various forms, including sudden, gradual, incremental, and recurring (periodic) patterns, much prior research treats it as unpredictable, often employing detect-then-adapt strategies~\cite{lu2025early, jiao2023reduced, yang2025adapting}. In contrast, the Temporal Domain Generalization (TDG) setting frequently focuses on leveraging more predictable evolutionary dynamics for proactive generalization~\cite{bai2023temporal}. However, existing TDG methods often oversimplify these dynamics, primarily modeling smooth, incremental drifts while struggling to capture complex, long-range temporal structures like periodicity (a key limitation, Challenge 1) which are prevalent in many real-world data streams yet demand different modeling approaches than simple monotonic progression.

Furthermore, a common methodology within TDG involves segmenting the continuous data stream into discrete temporal domains $\{D_1, D_2, \dots, D_T\}$, assuming concept drift occurs primarily between these sequential domains~\cite{nasery2021training}. This discretization, while simplifying the problem, often struggles to perfectly align with real-world data streams, potentially violating the implicit assumption that data within each domain $D_t$ is Independent and Identically Distributed (IID). Consequently, individual domains may harbor internal non-IID structures or localized noise patterns. This presence of intra-domain complexities leads to sensitivity to domain-specific noise and artifacts, hindering robust generalization (a second key limitation, Challenge 2), particularly for TDG models designed predominantly to address shifts occurring only at the domain boundaries. Effectively handling both the diverse inter-domain evolutionary dynamics (including periodicity) and these intra-domain data characteristics is therefore crucial for robust temporal generalization.

\textbf{Dynamics Learning via Koopman Theory.}
Koopman operator theory~\cite{koopman1931hamiltonian} offers a powerful framework for analyzing nonlinear dynamical systems via linear operators acting on observables in an infinite-dimensional function space. In practice, machine learning methods approximate this by learning transformations (often using autoencoders) into a latent Koopman space where the system's dynamics can be linearly propagated~\cite{nayak2024temporally,liu2023koopa}. This learned linear structure facilitates efficient long-term prediction and control~\cite{li2025mamko}. While Koopman-based approaches have seen increasing use in time series forecasting~\cite{smekal2024towards}, they typically focus on modeling the dynamics within a given data stream (data-centric). Our work diverges by applying Koopman theory to model the evolution of model parameters, interpreting parameter changes under concept drift. Furthermore, we uniquely integrate this parameter-centric Koopman modeling with frequency-domain analysis for disentangling dynamics, which towards stable and robust generalization in complex drifting scenarios.

\section{Theoretical Analysis Details}
\label{app:theory_details}
\newcommand{\norm}[1]{\left\lVert#1\right\rVert}
\newcommand{\abs}[1]{\left\lvert#1\right\rvert}
\newcommand{\E}{\mathbb{E}}
\newcommand{\R}{\mathbb{R}}

This section provides supplementary details for the theoretical analysis presented in Section~\ref{sec:theory}, including formal assumptions, detailed statements of the theorem and lemmas, and proofs.

\subsection{Assumptions}
\label{app:assumptions}
\begin{assumption}[Encoder/Decoder Lipschitz]
\label{ass:lipschitz}
The encoders $\phi_{low}: \R^D \to \R^m$ and $\phi_{high}: \R^D \to \R^m$ are $L_{\phi}$-Lipschitz:
\begin{equation}
    \forall \theta, \theta' \in \R^D, \quad \norm{\phi(\theta) - \phi(\theta')} \leq L_{\phi} \norm{\theta - \theta'}, \quad \text{for } \phi \in \{\phi_{low}, \phi_{high}\}.
\end{equation}
The shared decoder $\phi^{-1}: \R^m \to \R^D$ is $L_{dec}$-Lipschitz:
\begin{equation}
    \forall z, z' \in \R^m, \quad \norm{\phi^{-1}(z) - \phi^{-1}(z')} \leq L_{dec} \norm{z - z'}.
\end{equation}
\end{assumption}
This implies that small changes in the parameter space lead to bounded changes in the latent space, and vice-versa, ensuring stability of the transformations.

\begin{assumption}[Model-class Capacity]
\label{ass:rademacher}
The composite hypothesis class $\mathcal{G} = \{x \mapsto g(x; \phi^{-1}(z)) \mid z \in \R^m\}$, where $g$ is the base model and $\phi^{-1}$ is the decoder, has an empirical Rademacher complexity $\mathcal{R}_{n}(\mathcal{G})$ on any sequence of $n$ samples drawn from a target domain $D_{T+1}$ (which is $\beta$-mixing~\cite{jo2021machine,arora2019fine}) bounded as:
\begin{equation}
    \mathcal{R}_{n}(\mathcal{G}) \leq \frac{C}{\sqrt{n}},
\end{equation}
\end{assumption}
For constant $C > 0$, this assumption bounds the complexity of the function class our model can represent, crucial for generalization guarantees.

\subsection{Detailed Lemma 1 and Proof (Koopman Stability)}

\begin{lemma}[Restatement of Lemma 1 - Koopman Stability]
\label{lemma:koopman_stability_restated}
For any initial low-frequency latent error $e_{t_0,low} = z_{t_0,low} - \hat{z}_{t_0,low}$ at an initial time $t_0$, and a prediction horizon $h = (T+1) - t_0$, the propagated error $e_{t_0+h,low} = z_{t_0+h,low} - \hat{z}_{t_0+h,low}$ under the learned Koopman operator $K \in \R^{m \times m}$ is bounded.
Specifically,
\begin{equation}
    \norm{K^h e_{t_0,low}} \leq C_K (1 + h)^{q-1} \norm{e_{t_0,low}},
\end{equation}
where $q$ is the size of the largest Jordan block of $K$, and $C_K = \kappa(V) = \norm{V}\norm{V^{-1}}$ is the condition number of the matrix $V$ from the Jordan decomposition $K = VJV^{-1}$.
If $K$ is diagonalizable (i.e., $q=1$) and its spectral radius $\rho(K) = \max_i \abs{\lambda_i(K)} < 1$, the bound sharpens to:
\begin{equation}
    \norm{K^h e_{t_0,low}} \leq \kappa(V) \rho(K)^h \norm{e_{t_0,low}}.
\end{equation}
\end{lemma}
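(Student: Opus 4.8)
The plan is to prove Lemma~\ref{lemma:koopman_stability_restated} by reducing the growth of $\|K^h\|$ to the growth of powers of Jordan blocks, which is an elementary but instructive computation. First I would establish that the error propagates linearly: since $\hat z_{t+1,low} = K z_{t,low}$ in the ideal recursion (and the reconstruction loss $\mathcal L_{koop}$ enforces $z_{t+1,low}\approx \hat z_{t+1,low}$), one has $e_{t+1,low} = K e_{t,low}$ up to the residual driven by $\mathcal L_{koop}$; iterating from $t_0$ gives $e_{t_0+h,low} = K^h e_{t_0,low}$, so everything reduces to bounding the operator norm $\|K^h\|$.

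Next I would invoke the Jordan decomposition $K = VJV^{-1}$ with $J = \mathrm{diag}(J_1,\dots,J_r)$, each $J_i$ a Jordan block of size $q_i$ with eigenvalue $\lambda_i$, and $q = \max_i q_i$. Then $K^h = V J^h V^{-1}$, so $\|K^h e_{t_0,low}\| \le \|V\|\,\|V^{-1}\|\,\|J^h\|\,\|e_{t_0,low}\| = \kappa(V)\,\|J^h\|\,\|e_{t_0,low}\|$. The core step is the standard closed form for a Jordan block power: $(J_i^h)_{kl} = \binom{h}{l-k}\lambda_i^{\,h-(l-k)}$ for $l\ge k$ and $0$ otherwise. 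I would bound each entry by $\binom{h}{j}|\lambda_i|^{h-j} \le h^j |\lambda_i|^{h-j}$ for the relevant shifts $j=l-k < q_i \le q$, and sum over the $O(q^2)$ nonzero entries to get $\|J_i^h\| \le \mathrm{poly}(q)\,(1+h)^{q-1}\max(1,|\lambda_i|)^{h}$; absorbing the $q$-dependent constant and the spectral-radius factor (which is $\le$ a constant times $(1+h)^{q-1}$ when $\rho(K)\le 1$, and is handled separately in the contractive case) into $C_K = \kappa(V)$ yields $\|K^h e_{t_0,low}\| \le C_K(1+h)^{q-1}\|e_{t_0,low}\|$. For the sharpened bound, when $K$ is diagonalizable ($q=1$) we have $J^h = \mathrm{diag}(\lambda_i^h)$, so $\|J^h\| = \max_i|\lambda_i|^h = \rho(K)^h$, giving directly $\|K^h e_{t_0,low}\| \le \kappa(V)\rho(K)^h\|e_{t_0,low}\|$.

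The main obstacle is bookkeeping rather than depth: making the polynomial-in-$h$ factor come out exactly as $(1+h)^{q-1}$ (and not, say, $(1+h)^{q}$ or with an extra $q$-dependent multiplicative constant) requires being slightly careful about how the binomial coefficients $\binom{h}{j}$ over all blocks and all entry offsets are bounded and collected, and about folding the leftover constants into $C_K$ honestly. A secondary subtlety is the precise meaning of $e_{t_0,low}$ when $\mathcal L_{koop} \neq 0$: strictly, $e_{t_0+h,low} = K^h e_{t_0,low} + \sum_{j} K^{h-1-j}\delta_j$ where $\delta_j$ are the per-step Koopman residuals, so I would either state the lemma under the idealization that these residuals are absorbed into the definition of the effective initial error, or add a remark that the residual term is controlled by $\sqrt{\mathcal L_{koop}}$ times a geometric/polynomial factor in $h$ and does not change the qualitative bound. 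I would present the clean statement as above and relegate the residual accounting to a remark.
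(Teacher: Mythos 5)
Your proposal follows essentially the same route as the paper's proof: Jordan decomposition $K=VJV^{-1}$, the standard polynomial bound $\norm{J^h}\lesssim(1+h)^{q-1}$ on powers of Jordan blocks (which the paper simply cites, while you derive it from the binomial closed form), the condition number $\kappa(V)$ from submultiplicativity, and the exact $\rho(K)^h$ bound in the diagonalizable case. The two subtleties you flag — absorbing the block-dependent constant $C''$ into $C_K$ so that the stated $C_K=\kappa(V)$ is honest, and the per-step Koopman residuals $\delta_j$ that make $e_{t_0+h,low}=K^h e_{t_0,low}$ only an idealization — are both present (and similarly glossed over) in the paper's own argument, so your treatment is, if anything, slightly more careful.
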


\begin{proof}[Proof of Lemma~\ref{lemma:koopman_stability_restated}]

\textbf{Jordan Canonical Form:} Any square matrix $K \in \R^{m \times m}$ admits a Jordan decomposition $K = VJV^{-1}$, where $J = \text{diag}(J_1, J_2, \dots, J_s)$ is a block diagonal matrix. Each $J_i$ is a Jordan block corresponding to an eigenvalue $\lambda_i$ of $K$. $V$ is the matrix whose columns are the eigenvectors and generalized eigenvectors of $K$. Let $q$ be the size of the largest Jordan block.

\textbf{Power of a Matrix via Jordan Form:} The $h$-th power of $K$ is $K^h = (VJV^{-1})^h = VJ^hV^{-1}$. Consequently, $J^h = \text{diag}(J_1^h, J_2^h, \dots, J_s^h)$.

\textbf{Bound on the Norm of $J_i^h$:} For a Jordan block $J_i$ of size $q_i$ with eigenvalue $\lambda_i$, and assuming $\abs{\lambda_i} \leq 1$ for stability, a standard result~\cite{horn2012matrix} states that there exists a constant $C'_{q_i}$ such that for $h \geq 0$, we have,
\begin{equation}
    \norm{J_i^h} \leq C'_{q_i} (1+h)^{q_i-1} \abs{\lambda_i}^{h-(q_i-1)} \leq C'_{q_i} (1+h)^{q_i-1} \quad (\text{if } \abs{\lambda_i} \leq 1).
\end{equation}
Thus, for the overall Jordan form matrix $J$, if $q = \max_i q_i$:
\begin{equation}
    \norm{J^h} \leq C'' (1+h)^{q-1},
\end{equation}
for some constant $C''$ that depends on the constants $C'_{q_i}$ and the number of blocks.

\textbf{Error Propagation Bound:} The propagated error is $K^h e_{t_0,low}$. Its norm is:
\begin{align}
    \norm{K^h e_{t_0,low}} &= \norm{VJ^hV^{-1} e_{t_0,low}} \\
    &\leq \norm{V} \norm{J^h} \norm{V^{-1}} \norm{e_{t_0,low}} \\
    &\leq \kappa(V) C'' (1+h)^{q-1} \norm{e_{t_0,low}},
\end{align}
where $\kappa(V) = \norm{V}\norm{V^{-1}}$ is the condition number of $V$. We define $C_K = \kappa(V) C''$ (or absorb $C''$ into the definition presented in the lemma statement). This shows polynomial growth if $q > 1$.

\textbf{Diagonalizable Case:} If $K$ is diagonalizable, then $J$ is a diagonal matrix of eigenvalues, $J = \Lambda = \text{diag}(\lambda_1, \dots, \lambda_m)$. Then $J^h = \text{diag}(\lambda_1^h, \dots, \lambda_m^h)$.
\begin{equation}
    \norm{J^h} = \max_i \abs{\lambda_i^h} = (\max_i \abs{\lambda_i})^h = \rho(K)^h.
\end{equation}
In this case, the error bound becomes:
\begin{equation}
    \norm{K^h e_{t_0,low}} \leq \kappa(V) \rho(K)^h \norm{e_{t_0,low}}.
\end{equation}
If $\rho(K) < 1$, this implies exponential decay of the error, which is highly desirable for the stability of low-frequency dynamics. The loss term $\mathcal{L}_{koop}$ encourages learning such a stable $K$.
\end{proof}

\subsection{Detailed Lemma 2 and Proof (High-Frequency Smoothness Bias)}

\begin{lemma}[Restatement of Lemma 2 - High-Frequency Smoothness Bias]
\label{lemma:hf_smoothness_restated}
Minimizing the high-frequency regularization term $\mathcal{R}_{high} = \sum_{t=1}^{T-1} \norm{z_{t+1,high} - z_{t,high}}_2^2$ is equivalent to performing Maximum A Posteriori (MAP) estimation for the latent high-frequency sequence $Z_{high} = \{z_{1,high}, \dots, z_{T,high}\}$ under a Gaussian random-walk prior $p(z_{t+1,high} | z_{t,high}) = \mathcal{N}(z_{t+1,high} | z_{t,high}, \sigma^2 I)$. The precision of this prior is $\lambda_{prior} = 1/(2\sigma^2)$.
\end{lemma}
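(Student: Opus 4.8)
\textbf{Proof proposal for Lemma~\ref{lemma:hf_smoothness_restated}.}

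The plan is to write out the joint prior density for the latent sequence $Z_{high}=\{z_{1,high},\dots,z_{T,high}\}$ induced by the Gaussian random walk, take the negative log of the associated posterior, and show that the only $Z_{high}$-dependent term coincides (up to a multiplicative constant) with $\mathcal{R}_{high}$. First I would factor the prior using the Markov structure of the random walk: $p(Z_{high}) = p(z_{1,high})\prod_{t=1}^{T-1} p(z_{t+1,high}\mid z_{t,high})$, where each transition density is the Gaussian $\mathcal{N}(z_{t+1,high}\mid z_{t,high},\sigma^2 I)$ on $\R^m$, i.e. proportional to $\exp\!\big(-\tfrac{1}{2\sigma^2}\norm{z_{t+1,high}-z_{t,high}}_2^2\big)$. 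Multiplying these, the exponent aggregates to $-\tfrac{1}{2\sigma^2}\sum_{t=1}^{T-1}\norm{z_{t+1,high}-z_{t,high}}_2^2$, plus the $z_{1,high}$ term from the (improper or flat) initial prior, plus an additive normalization constant independent of $Z_{high}$.

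Next I would invoke Bayes' rule: the MAP estimate maximizes $\log p(Z_{high}\mid \text{data}) = \log p(\text{data}\mid Z_{high}) + \log p(Z_{high}) - \log p(\text{data})$. Since the lemma concerns only the regularization contribution, I hold the likelihood (and the data-evidence constant) fixed and isolate the prior's effect on the objective. Negating and dropping constants, the prior contributes exactly $\tfrac{1}{2\sigma^2}\sum_{t=1}^{T-1}\norm{z_{t+1,high}-z_{t,high}}_2^2 = \lambda_{prior}\,\mathcal{R}_{high}$ with $\lambda_{prior}=1/(2\sigma^2)$. Thus minimizing $\gamma\,\mathcal{R}_{high}$ inside $\mathcal{L}_{total}$ is precisely adding this Gaussian-random-walk log-prior penalty to the negative log-likelihood, which is the definition of MAP estimation; matching $\gamma$ with $\lambda_{prior}$ identifies the implied prior variance as $\sigma^2 = 1/(2\gamma)$. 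I would close by remarking that the initial-state term can be absorbed into a flat prior on $z_{1,high}$ (or a broad Gaussian whose contribution is negligible), so it does not affect the equivalence.

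The main obstacle, such as it is, is bookkeeping rather than depth: one must be careful that the constants discarded (the Gaussian normalizers $(2\pi\sigma^2)^{-m/2}$ per transition and the data evidence $p(\text{data})$) are genuinely independent of $Z_{high}$, and that the per-coordinate identity-covariance structure makes the quadratic form a plain squared Euclidean norm rather than a Mahalanobis norm. A secondary subtlety is the treatment of the boundary term $p(z_{1,high})$: the cleanest route is to state explicitly that we place an (improper) uniform prior on $z_{1,high}$, so that the chain-rule product contributes only the $T-1$ pairwise differences appearing in $\mathcal{R}_{high}$, exactly as written. With these two points handled, the equivalence and the precision identification $\lambda_{prior}=1/(2\sigma^2)$ follow immediately.
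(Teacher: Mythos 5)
Your proposal follows essentially the same route as the paper's proof: factor the Gaussian random-walk prior via the Markov chain rule, take the negative log, discard the normalization constants and the $z_{1,high}$ boundary term, and identify the surviving quadratic penalty $\tfrac{1}{2\sigma^2}\sum_{t=1}^{T-1}\|z_{t+1,high}-z_{t,high}\|_2^2$ with $\lambda_{prior}\mathcal{R}_{high}$. If anything you are slightly more careful than the paper — you explicitly retain the data likelihood in the Bayes decomposition (so the argument is genuinely about the posterior rather than just the prior mode), fix the boundary term with a flat prior on $z_{1,high}$, and tie $\gamma$ to the implied prior variance — but these are refinements of the same argument, not a different one.
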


\begin{proof}[Proof of Lemma \ref{lemma:hf_smoothness_restated}]
\textbf{Gaussian Random-Walk Prior:} Assume the high-frequency latent states $z_{t,high}$ evolve according to a first-order Markov process:
\begin{equation}
    z_{t+1,high} = z_{t,high} + \xi_t, \quad \text{where } \xi_t \sim \mathcal{N}(0, \sigma^2 I),
\end{equation}
and $\xi_t$ are Gaussian noise vectors. The conditional probability (prior) is:
\begin{equation}
    p(z_{t+1,high} | z_{t,high}) = \frac{1}{(2\pi\sigma^2)^{m/2}} \exp\left(-\frac{1}{2\sigma^2} \norm{z_{t+1,high} - z_{t,high}}_2^2\right),
\end{equation}
% where $m = \dim(z_{t,high})$.

\textbf{Log-Likelihood of the Sequence under Prior:} For a sequence $Z_{high} = \{z_{1,high}, \dots, z_{T,high}\}$, the joint probability under this prior model is $p(Z_{high}) = p(z_{1,high}) \prod_{t=1}^{T-1} p(z_{t+1,high} | z_{t,high})$. The log-probability is:
\begin{align}
    \log p(Z_{high}) &= \log p(z_{1,high}) + \sum_{t=1}^{T-1} \log p(z_{t+1,high} | z_{t,high}) \\
    &= \log p(z_{1,high}) + \sum_{t=1}^{T-1} \left[ -\frac{m}{2}\log(2\pi\sigma^2) - \frac{1}{2\sigma^2} \norm{z_{t+1,high} - z_{t,high}}_2^2 \right].
\end{align}

\textbf{Maximum A Posteriori (MAP) Estimation:} MAP estimation seeks to find $Z_{high}$ that maximizes $p(Z_{high})$. This is equivalent to maximizing $\log p(Z_{high})$, or equivalently, minimizing $-\log p(Z_{high})$:
\begin{equation}
    \hat{Z}_{high}^{MAP} = \arg\min_{Z_{high}} \left[ -\log p(z_{1,high}) + \sum_{t=1}^{T-1} \left( \frac{m}{2}\log(2\pi\sigma^2) + \frac{1}{2\sigma^2} \norm{z_{t+1,high} - z_{t,high}}_2^2 \right) \right].
\end{equation}
The terms $-\log p(z_{1,high})$ and $\frac{m}{2}\log(2\pi\sigma^2)$ are constant with respect to the sum of squared differences. Thus, the optimization simplifies to:
\begin{equation}
    \hat{Z}_{high}^{MAP} = \arg\min_{Z_{high}} \sum_{t=1}^{T-1} \frac{1}{2\sigma^2} \norm{z_{t+1,high} - z_{t,high}}_2^2.
\end{equation}

\textbf{Equivalence to $\mathcal{R}_{high}$:} The regularization term is $\mathcal{R}_{high} = \sum_{t=1}^{T-1} \norm{z_{t+1,high} - z_{t,high}}_2^2$.
Minimizing $\mathcal{R}_{high}$ is equivalent to minimizing $\frac{1}{2\sigma^2} \mathcal{R}_{high}$, since $\frac{1}{2\sigma^2}$ is a positive constant.
This shows that minimizing $\mathcal{R}_{high}$ is equivalent to MAP estimation under the specified Gaussian random-walk prior with precision $\lambda_{prior} = 1/(2\sigma^2)$. This encourages smoothness in the trajectory of high-frequency components.
\end{proof}

\subsection{Detailed Proof of Theorem 1 (Multiscale Generalization Bound)}

\begin{theorem}[Restatement of Theorem 1 - Multiscale Generalization Bound]
\label{theorem:generalization_bound_restated}
Let $\hat{\theta}_{T+1}$ be the predicted parameter for the target domain $D_{T+1}$ obtained via Eq.~\eqref{eq:param_prediction}, and let $\theta^*_{T+1} = \arg\min_{\theta} \E_{P_{T+1}}[\ell(g(X; \theta), Y)]$ be the optimal parameter for the target distribution $P_{T+1}$. The loss function $\ell(u,y)$ is $L_\ell$-Lipschitz in its first argument $u$. The base model $g(x;\theta)$ is $L_g$-Lipschitz with respect to its parameters $\theta$. Under Assumptions \ref{ass:lipschitz} (Lipschitz encoders/decoder) and \ref{ass:rademacher} (bounded Rademacher complexity $C$), the expected target-domain excess risk $E_{risk} := \E_{P_{T+1}}[\ell(g(X; \hat{\theta}_{T+1}), Y)] - \E_{P_{T+1}}[\ell(g(X; \theta^*_{T+1}), Y)]$ satisfies with probability at least $1 - \delta > 0$:
\begin{equation}
\label{eq:final_bound}
    E_{risk} \leq L_\ell L_g L_{dec}(\mathcal{E}_{low} + \mathcal{E}_{high}) + 2 L_\ell L_g \frac{C}{\sqrt{n}} + L_\ell L_g \sqrt{\frac{B^2 \log(1/\delta)}{2n}},
\end{equation}
where $\mathcal{E}_{low} = \norm{\hat{z}_{T+1,low} - z^*_{T+1,low}}$ is the error in the predicted low-frequency latent component (bounded by Lemma \ref{lemma:koopman_stability_restated}), $\mathcal{E}_{high} = \norm{\hat{z}_{T+1,high} - z^*_{T+1,high}}$ is the error in the high-frequency latent component (regularized by $\mathcal{R}_{high}$ as per Lemma \ref{lemma:hf_smoothness_restated}), $n_T$ is the number of samples in the target domain, and $B$ is an upper bound on $L_g L_{dec} \norm{z - z'}$.
\end{theorem}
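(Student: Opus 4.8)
The plan is to bound the target-domain excess risk by chaining three elementary steps: (i) a Lipschitz reduction from risk-in-parameters to a parameter-distance term plus an empirical-risk gap, (ii) a translation of the parameter distance $\norm{\widehat\theta_{T+1}-\theta_{T+1}^{\star}}$ into the sum of latent prediction errors $\mathcal{E}_{low}+\mathcal{E}_{high}$ via the decoder's Lipschitz constant, and (iii) a standard Rademacher-type uniform-convergence argument to control the statistical gap. First I would insert the empirical risk minimiser (or the empirical counterpart of the target risk) as a pivot and split $E_{risk}$ into a generalization term, which is handled by Assumption~\ref{ass:rademacher}, and an approximation term governed purely by how close $\widehat\theta_{T+1}$ is to $\theta_{T+1}^{\star}$. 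Because $\ell(\cdot,y)$ is $L_\ell$-Lipschitz and $g(x;\cdot)$ is $L_g$-Lipschitz in its parameters, the approximation term is at most $L_\ell L_g\,\norm{\widehat\theta_{T+1}-\theta_{T+1}^{\star}}$ pointwise, hence also in expectation over $P_{T+1}$.

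Next I would expand the parameter gap through the decomposition in Eq.~\eqref{eq:param_prediction}. Writing $\widehat\theta_{T+1}=\phi^{-1}\bigl(\mathbf{K}\phi_{low}(\theta_{T,low})+\phi_{high}(\theta_{T,high})\bigr)=\phi^{-1}(\widehat z_{T+1,low}+\widehat z_{T+1,high})$ and $\theta_{T+1}^{\star}=\phi^{-1}(z_{T+1,low}^{\star}+z_{T+1,high}^{\star})$ (the latter being the exact latent code of the optimal parameter under the shared decoder, which is well-defined because the spectral decomposition in Eq.~\eqref{eq:IDFT} is linear and the encoders are applied componentwise), the $L_{dec}$-Lipschitz property of $\phi^{-1}$ gives
\begin{equation}
\norm{\widehat\theta_{T+1}-\theta_{T+1}^{\star}}
\le L_{dec}\bigl(\norm{\widehat z_{T+1,low}-z_{T+1,low}^{\star}}+\norm{\widehat z_{T+1,high}-z_{T+1,high}^{\star}}\bigr)
= L_{dec}(\mathcal{E}_{low}+\mathcal{E}_{high}),
\end{equation}
by the triangle inequality. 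Combining with step (i) yields the first term $L_\ell L_g L_{dec}(\mathcal{E}_{low}+\mathcal{E}_{high})$. I would then invoke Lemma~\ref{lemma:koopman_stability_restated} to observe that $\mathcal{E}_{low}$ is itself controlled by $C_K(1+h)^{q-1}\norm{e_{t_0,low}}$ (or the sharper $\rho(\mathbf{K})^h$ bound when $\mathbf{K}$ is stable), and Lemma~\ref{lemma:hf_smoothness_restated} to note that $\mathcal{E}_{high}$ is the quantity directly penalised by $\mathcal{R}_{high}$ under the random-walk prior, so that minimising the training objective in Eq.~\eqref{eq:loss_total} is exactly what drives both latent errors down.

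For the statistical term I would apply the classical symmetrization plus bounded-differences (McDiarmid) argument to the loss class $\{(x,y)\mapsto \ell(g(x;\phi^{-1}(z)),y)\}$, which is $L_\ell$-Lipschitz composed with $\mathcal{G}$; by the Ledoux--Talagrand contraction inequality its Rademacher complexity is at most $L_\ell\, L_g\,\mathcal{R}_n(\mathcal{G})\le L_\ell L_g C/\sqrt n$, contributing the $2L_\ell L_g C/\sqrt n$ term, and the concentration residual contributes the $\sqrt{B^2\log(1/\delta)/(2n)}$ term with $B$ the stated bound on the loss range. Here I would use the $\beta$-mixing assumption on $D_{T+1}$ to justify the uniform-convergence bound for dependent samples via a blocking argument (Assumption~\ref{ass:rademacher} already packages this). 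Adding the two pieces gives Eq.~\eqref{eq:final_bound}.

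The main obstacle I anticipate is making step (ii) fully rigorous: the identity $\widehat\theta_{T+1}=\phi^{-1}(\widehat z_{T+1,low}+\widehat z_{T+1,high})$ requires the shared decoder to act on the \emph{sum} of the two latent codes, so one must be careful that the ``optimal latent code'' $z_{T+1,low}^{\star}+z_{T+1,high}^{\star}$ is a legitimate preimage of $\theta_{T+1}^{\star}$ — i.e. one needs either an exact autoencoder reconstruction assumption ($\phi^{-1}(\phi_{low}(\theta_{low})+\phi_{high}(\theta_{high}))=\theta$ when $\theta=\theta_{low}+\theta_{high}$) or an explicit reconstruction-error slack term, which is precisely what $\mathcal{L}_{rec}$ in Eq.~\eqref{eq:loss_rec} controls and which should be carried as an additive $\mathcal{O}(\sqrt{\mathcal{L}_{rec}})$ correction absorbed into $\mathcal{E}_{low}+\mathcal{E}_{high}$. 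The remaining steps are routine Lipschitz bookkeeping and textbook generalization theory.
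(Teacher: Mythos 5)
Your proposal follows essentially the same route as the paper's proof: the Lipschitz chain $E_{risk} \le L_\ell L_g \|\hat{\theta}_{T+1}-\theta^*_{T+1}\| \le L_\ell L_g L_{dec}(\mathcal{E}_{low}+\mathcal{E}_{high})$ via the triangle inequality on the latent decomposition, followed by the standard Rademacher-complexity and concentration terms for the estimation error. The obstacle you flag in step (ii) — that $\theta^*_{T+1}$ must admit a legitimate latent preimage under the shared decoder — is a real gap, but the paper's own proof simply asserts $\theta^*_{T+1}=\phi^{-1}(z^*_{T+1})$ without the reconstruction-error slack you propose, so your treatment is if anything more careful on that point.
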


\begin{proof}[Proof of Theorem \ref{theorem:generalization_bound_restated}]
The proof involves combining bounds on parameter prediction error with standard generalization theory.

\textbf{Bounding Excess Risk by Parameter Error:}
The excess risk can be related to the difference in parameters using Lipschitz properties:
\begin{equation}
\begin{aligned}
     E_{risk} &= \E_{P_{T+1}}[\ell(g(X; \hat{\theta}_{T+1}), Y) - \ell(g(X; \theta^*_{T+1}), Y)] \\
    &\leq \E_{P_{T+1}}[L_\ell \abs{g(X; \hat{\theta}_{T+1}) - g(X; \theta^*_{T+1})}] \quad (\text{by } L_\ell\text{-Lipschitz of } \ell) \\
    &\leq L_\ell \E_{P_{T+1}}[L_g \norm{\hat{\theta}_{T+1} - \theta^*_{T+1}}] \quad (\text{by } L_g\text{-Lipschitz of } g) \\
    &= L_\ell L_g \norm{\hat{\theta}_{T+1} - \theta^*_{T+1}}.
\end{aligned}
\label{eq:param_error_term}
\end{equation}
Since $\hat{\theta}_{T+1} = \phi^{-1}(\hat{z}_{T+1})$ and $\theta^*_{T+1} = \phi^{-1}(z^*_{T+1})$ by $L_{dec}$-Lipschitz of $\phi^{-1}$ (Assumption \ref{ass:lipschitz}), we have:
\begin{equation}
    \norm{\hat{\theta}_{T+1} - \theta^*_{T+1}} \leq L_{dec} \norm{\hat{z}_{T+1} - z^*_{T+1}}.
\end{equation}
The latent state $z = z_{low} + z_{high}$. So, $\hat{z}_{T+1} = \hat{z}_{T+1,low} + \hat{z}_{T+1,high}$.
\begin{equation}
    \begin{aligned}
    \norm{\hat{z}_{T+1} - z^*_{T+1}} &= \norm{(\hat{z}_{T+1,low} - z^*_{T+1,low}) + (\hat{z}_{T+1,high} - z^*_{T+1,high})} \\
    &\leq \norm{\hat{z}_{T+1,low} - z^*_{T+1,low}} + \norm{\hat{z}_{T+1,high} - z^*_{T+1,high}} \quad  \\
    &= \mathcal{E}_{low} + \mathcal{E}_{high}.
\end{aligned}
\end{equation}
Therefore, the first term of the risk bound related to parameter prediction error is:
\begin{equation}
    \text{PredictionErrorTerm} \leq L_\ell L_g L_{dec}(\mathcal{E}_{low} + \mathcal{E}_{high}). \label{eq:pred_error_final}
\end{equation}
$\mathcal{E}_{low}$ is bounded by Lemma~\ref{lemma:koopman_stability_restated} and $\mathcal{E}_{high}$ is controlled by the regularization from Lemma~\ref{lemma:hf_smoothness_restated}.

\textbf{Standard Generalization Error Bound (Estimation Error):}
The previous term relates the risk of our predictor $\hat{\theta}_{T+1}$ to the risk of the optimal $\theta^*_{T+1}$, assuming $\theta^*_{T+1}$ is within our hypothesis class. We also need to account for the generalization error from the empirical risk minimizer (over the unseen target domain $D_{T+1}$) to the true risk.
Let $\mathcal{L}(\theta) = \ell(g(X;\theta),Y)$. The composite function $X \mapsto \mathcal{L}(\theta)$ is $L_{\mathcal{L}}$-Lipschitz where $L_{\mathcal{L}} \approx L_l L_g$.
Using a standard Rademacher complexity-based generalization bound~\cite{jo2021machine}: With probability at least $1-\delta$ over the draw of $n$ samples for $D_{T+1}$:
\begin{equation}
    \sup_{\theta \in \mathcal{H}_{\Theta}} (\E_{P_{T+1}}[\mathcal{L}(\theta)] - \E_{n}[\mathcal{L}(\theta)]) \leq 2 L_l L_g \mathcal{R}_{n}(\mathcal{G}_{\Theta}) + L_l L_g \sqrt{\frac{B_{\Theta}^2 \log(1/\delta)}{2n}},
\end{equation}
where $\mathcal{H}_{\Theta}$ is the space of parameters, $\mathcal{G}_{\Theta}$ is the function class induced by $g(X;\theta)$, $\mathcal{R}_{n}(\mathcal{G}_{\Theta})$ is its Rademacher complexity (here using $\mathcal{R}_{n_T}(\mathcal{G}) \leq C/\sqrt{n}$ from Assumption \ref{ass:rademacher} where $\mathcal{G}$ is the class $g(X; \phi^{-1}(z))$), and $B_{\Theta}$ is a bound on $g(X;\theta)$.
More directly, the generalization gap for our specific predictor $\hat{\theta}_{T+1}$ (derived from $z \in \R^m$) is:
\begin{equation}
    \E_{P_{T+1}}[\ell(g(X; \hat{\theta}_{T+1}), Y)] \leq \E_{n}[\ell(g(X; \hat{\theta}_{T+1}), Y)] + 2 L_\ell L_g \frac{C}{\sqrt{n}} + L_\ell L_g \sqrt{\frac{B^2 \log(1/\delta)}{2n}}, \label{eq:est_error_term}
\end{equation}
where $B$ bounds $L_{dec} \norm{z}$. This term essentially bounds how much the true risk of $\hat{\theta}_{T+1}$ can deviate from its (unobserved) empirical risk on $D_{T+1}$. The excess risk definition compares true risks. The parameter error term in Eq.~\eqref{eq:pred_error_final} captures the suboptimality of $\hat{\theta}_{T+1}$ relative to $\theta^*_{T+1}$. The estimation error term often appears when bounding $E_P[l(\hat{\theta}_{ERM})] - E_P[l(\theta^*)]$. 
The provided bound in Eq. \eqref{eq:final_bound} represents a structure where the first term is the approximation error (how far our best hypothesis $\hat{\theta}_{T+1}$ is from $\theta^*_{T+1}$ in terms of risk, scaled by Lipschitz constants) and the second and third terms represent the estimation error (how well one can estimate the risk of any hypothesis in the class $\mathcal{G}$ from $n$ samples).

\textbf{Combining Terms:}
The bound structure in our method: $E_{risk} \leq \text{PredictionErrorTerm} + \text{EstimationError}$.
The Prediction Error is $L_l L_g L_{dec}(\mathcal{E}_{low} + \mathcal{E}_{high})$.
The Estimation Error, using Assumption \ref{ass:rademacher} and standard results for Lipschitz losses, can be written as $2 L_\ell L_g \mathcal{R}_{n}(\mathcal{G}) + \text{ConfidenceTerm}$.
Substituting $\mathcal{R}_{n}(\mathcal{G}) \leq C/\sqrt{n}$:
$\text{EstimationError} \approx 2 L_\ell L_g \frac{C}{\sqrt{n}} + L_\ell L_g \sqrt{\frac{B^2 \log(1/\delta)}{2n}}.$
% (The paper's original bound has slightly different constants and structure for the Rademacher part: $2\sqrt{2}(C_{orig} + \log(1/\delta))/(n_T)$, where $C_{orig}$ might be differently defined or scaled. The version here is more standard for Rademacher bound on $1/\sqrt{n_T}$ plus Hoeffding.)
Combining these yields Eq. \eqref{eq:final_bound}.

% The theorem illustrates that the overall excess risk is controlled by:
% \begin{itemize}
%     \item The stability of Koopman predictions for low-frequency dynamics ($\mathcal{E}_{low}$).
%     \item The smoothness and regularity of high-frequency components ($\mathcal{E}_{high}$).
%     \item The complexity of the learned model class ($C$).
%     \item The number of samples in the target domain ($n_T$).
%     \item Lipschitz constants of the system components ($L_l, L_g, L_{dec}$), emphasizing the need for well-behaved functions.
% \end{itemize}
\end{proof}

\begin{algorithm}[]
\caption{FreKoo End-to-End Learning Procedure}
\label{alg:frekoo_e2e}
\begin{algorithmic}[1] % Number lines
    \REQUIRE Source domain datasets $D = \{D_1, ..., D_T\}$; Hyperparameters $\alpha, \beta, \gamma, \tau$; Koopman dimension $m$; Number of Epochs $N_{epochs}$.
    \ENSURE Trained FreKoo model components (${\Theta}, \phi_{low}, \phi_{high}, \phi^{-1}, \mathbf{K}$) and target parameters $\hat{{\theta}}_{T+1}$.

    \STATE Initialize base model $g(\cdot)$, Encoders $\phi_{low}, \phi_{high}$, Decoder $\phi^{-1}$, Koopman operator $\mathbf{K} \in \mathbb{R}^{m \times m}$.

    \FOR{$epoch = 1$ \TO $N_{epochs}$}
        \FOR{$t = 1$ \TO $T$}
            \STATE Build base model $g(\cdot; \theta_t ): \mathcal{X}_t \rightarrow \mathcal{Y}_t  \leftarrow \{\mathcal{D}_t\}$.
        \ENDFOR
        \STATE Obtain parameters trajectory ${\Theta} = [{\theta}_1, ..., {\theta}_T]^T \in \mathbb{R}^{T \times D}$.
        \STATE Compute DFT: $\Theta^{f} = \mathcal{F}(\Theta)$ via Eq.~\eqref{eq:DFT}.
        \STATE Compute spectral magnitudes $\mathbf{M}_f$ via Eq.~\eqref{eq:avg_magnitude}.
        \STATE Determine $Q_{top}$ based on $\tau$.
        \STATE Construct mask $\mathbf{B}$ via Eq.~\eqref{eq:mask_power_latex}.
        \STATE Compute IDFT $\Theta_{low} $; $\Theta_{high}$ via Eq.~\eqref{eq:IDFT}.
        
        \FOR{$t = 1$ \TO $T-1$} 
             \STATE ${z}_{t,low} \leftarrow \phi_{low}({\theta}_{t,low})$,
             \STATE $\hat{\mathbf{z}}_{t+1,low} \leftarrow \mathbf{K} {z}_{t,low}$,
             \STATE $\hat{{\theta}}_{t+1,low} \leftarrow \phi^{-1}(\hat{{z}}_{t+1,low})$,   
             \STATE ${z}_{t,high} \leftarrow \phi_{high}({\theta}_{t,high})$,
             \STATE $\hat{{\theta}}_{t+1} \leftarrow \phi^{-1}(\mathbf{K} \phi_{low}({\theta}_{t,low}) + \phi_{high}({\theta}_{t,high}))$ via Eq.~\eqref{eq:param_prediction}.
        \ENDFOR
    \ENDFOR 

    \STATE Perform final spectral decomposition on learned ${\Theta}$ to get $\theta_{T,low}, {\theta}_{T,high}$. 
    \STATE    $\hat{{z}}_{T+1,low} \leftarrow \mathbf{K} \phi_{low}({\theta}_{T,low})$, 
    \STATE   ${z}_{T,high} \leftarrow \phi_{high}({\theta}_{T,high})$,
    \STATE   $\hat{{\theta}}_{T+1} \leftarrow \phi^{-1}(\hat{{z}}_{T+1,low} + {z}_{T,high})$. 
\end{algorithmic}
\end{algorithm}
\section{Experiment Settings}
\label{app:exp_settings}
\subsection{Datasets}
\label{appendix:datasets}
% \begin{itemize}
\textbf{Rotated 2 Moons.} This benchmark adapts the 2-Moons dataset to model concept drift via rotation. It contains 1,800  2-dimensional samples across two classes, divided into 10 sequential domains. Each domain is rotated 18° counter-clockwise relative to the previous one. We train on domains 0-8 and test on domain 9, where the drift is caused the incremental rotation.
    
\textbf{Rotated MNIST.} We randomly sampled 1000 instances from the MNIST dataset and constructed a total of five domains by successively rotating them counter-clockwise by 15°, analogous to the Rotated 2 Moons setup. The first four rotated domains are used for training, while the fifth domain serves as the test set, creating incremental drift induced by progressive rotation transformations.
    
\textbf{Online News Popularity (ONP)\footnote{https://archive.ics.uci.edu/dataset/332/online+news+popularity}.} This dataset aggregates heterogeneous features of articles published by Mashable over two years, aiming to predict social media shares (popularity). It comprises 39,797 samples with 58 features, where concept drift is characterized by temporal shifts in popularity patterns. We partition the data into 6 time-ordered domains, using the first five for training and the last for testing. The dataset undergoes slight real-world concept drift over the observed time period.

\textbf{Shuttle\footnote{https://archive.ics.uci.edu/dataset/148/statlog+shuttle}.} The Shuttle dataset contains 58,000 instances of multi-class flight status classification under severe class imbalance. It is partitioned into 8 time-stamped domains using a chronological split: domains spanning timestamps 30-70 serve as training data, while the most recent period (70–80) is reserved for testing. The dataset also has real-world concept drifts over the observed time period.

\textbf{Electrical Demand\footnote{https://web.archive.org/web/20191121102533/http://www.inescporto.pt/\~jgama/ales/ales\_5.html}.} This dataset records electricity demand in a province, addressing a binary classification task to predict whether 30-minute demand exceeds or falls below the daily average for that time period. After removing instances with missing values, it contains 28,222 samples with 8 features. It is partitioned into 30 two-week chronological domains, with the first 29 used for training and the 30th for testing. Seasonal variations in demand induce concept drift, making this a real-world benchmark capturing both periodic and incremental drift patterns.
    
\textbf{House Prices\footnote{https://www.kaggle.com/datasets/htagholdings/property-sales}.} This dataset comprises housing price records from 2013–2019 for the regression task to predict property prices based on feature values. We treat each calendar year as a distinct domain, using 2013–2018 data for training and the final (2019) domain for testing. Concept drift emerges naturally from temporal economic shifts and market fluctuations over years.

\textbf{Appliances Energy Prediction\footnote{https://archive.ics.uci.edu/dataset/374/appliances+energy+prediction}.} This dataset addresses regression modeling for predicting appliance energy consumption in a low-energy building. Comprising 10-minute sensor readings over 4.5 months in 2016, it is partitioned into 9 chronological domains. We train on the first eight domains and evaluate on the final (most recent) ninth domain, with concept drift arising from temporal shifts in energy usage patterns across the observation period.

\subsection{Baselines}
\label{appendix:baselines}
\textbf{Time‑agnostic methods.} These methods do not consider the temporal drift, including Offline train on all source domains, train on the last source domain (LastDomain) and incrementally train on each source domain (IncFinetune).

\textbf{Continuous Domain Adaptation (CDA)}.  CDOT~\cite{ortiz2019cdot} predicts the future by transporting labelled samples of the last observed domain to an estimated target distribution via optimal transport, then retrains the classifier on those transported points.  CIDA~\cite{wang2020continuously} leverages adversarial alignment with a probabilistic domain discriminator to model continuous domain shifts, while its probabilistic extension (PCIDA) enforces higher-order moment matching via Gaussian parameter prediction. 

\textbf{Temporal Domain Generalization (TDG)}.  GI~\cite{nasery2021training} regularizes temporal complexity by supervising the first-order Taylor expansion of a time-sensitive model, enabling smooth adaptation to distribution shifts via adversarial selection of temporal perturbations.  LSSAE~\cite{qin2022generalizing} addresses evolving domain generalization by employing a latent structure-aware sequential autoencoder to model dynamic shifts in both data sample space (covariate shift) and category space (concept shift). It leverages variational inference and temporal smoothness constraints to capture continuous domain drift, enhancing generalization to unseen target domains.  DDA~\cite{zeng2023foresee} leverages an attention-based domain transformer to capture directional domain shifts and simulate future unseen domains through bi-level optimization with meta-learning.  DRAIN~\cite{bai2023temporal} assumes that model parameters vary over time within a fixed network architecture and employs a recurrent neural network to autoregressively predict domain-optimal parameters for future timesteps through temporal dependency modeling. 

\textbf{Continuous Temporal Domain Generalization (CTDG).} EvoS~\cite{xie2024evolving} proposes a multi-scale attention module (MSAM) to model evolving feature distribution patterns across sequential domains, dynamically standardizing features using predicted statistics to mitigate distribution shifts while employing adversarial training to maintain a shared feature space and prevent catastrophic forgetting.  Koodos~\cite{cai2024continuous} models data and model dynamics as a continuous-time system via Koopman operator theory. It integrates prior knowledge and multi-objective optimization to synchronize model evolution with data drift.

All baseline results presented in Table~\ref{tab:overall_results} were directly sourced from their respective original publications to ensure accurate and fair comparison.

\subsection{Implementation Details}
\label{appendix:imple}
The architecture and implementation of backbones and prediction models for all datasets align with DRAIN~\cite{bai2023temporal}. Specially, both the encoders and decoder employ a 4-layer MLP architecture with layer dimensions $[1024, 512, 128, m]$, where $m=32$ denotes the dimension of the Koopman operator. All experiments were conducted on a server with 187GB memory, an Intel(R) Xeon(R) Gold 6226R CPU@2.90GHz, and two A100 GPUs. 

We adopt the Adam optimizer across all datasets, with distinct learning rates for the prediction module $lr_{pre}$, encoder-decoder module $lr_{co}$, and Koopman module $lr_{ko}$.
For the 2-Moons dataset, the coder and Koopman learning rates are set to $lr_{co} = 1\times10^{-3}$ and prediction learning rate $lr_{pre} = 1\times10^{-2}$, regulated by $\tau=0.9$, $\alpha=10$, $\beta=\gamma=1$. The Rot-MNIST configuration retains $lr_{pre}=lr_{co}=lr_{ko}=1\times10^{-3}$ and $\tau=0.9$, $\alpha=0.1$, $\beta=\gamma=1$. For ONP, we use $ lr_{co} = 1\times10^{-2}$ for coder/Koopman and $ lr_{pre} = 1\times10^{-3}$ for prediction, combined with $\tau=0.8$, $\alpha=0.1$, $\beta=1$, $\gamma=0.01$. The Shuttle dataset employs a uniform learning rate $1\times10^{-3}$  for all modules, and $\tau=0.9$, $\alpha=\beta=\gamma=1$ . For Elec2, $lr_{pre} = 1\times10^{-2}$, $lr_{co}=1\times10^{-4}$ and  $lr_{ko}=1\times10^{-3}$ governed by $\tau=0.1$, $\alpha=10$, $\beta=0.1$, $\gamma=1$. The House dataset shares the coder/Koopman learning rate $1\times10^{-3}$ with prediction rate $1\times10^{-2}$ with $\tau=0.3$, $\alpha=0.1$, $\beta=10$, $\gamma=1$. Finally, Appliance maintains a uniform learning rate $1\times10^{-3}$ across all modules with $\tau=0.8$, $\alpha=1$, $\beta=1$, $\gamma=100$.

\begin{figure}[t]
    \centering
    \begin{subfigure}[t]{0.48\textwidth}
        \centering
        \includegraphics[width=\linewidth]{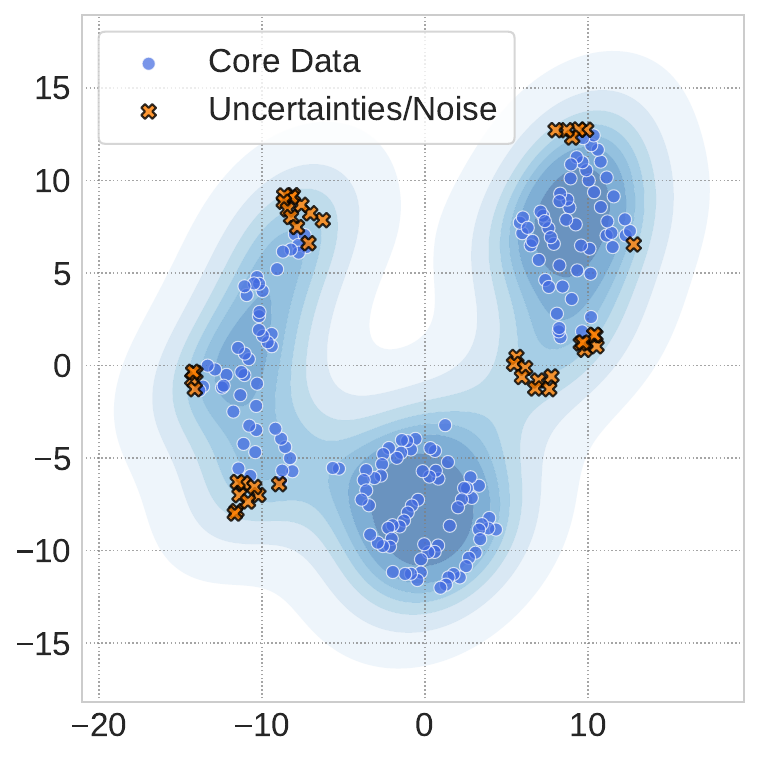}
        \caption{$\mathcal{D}_1$ on Elec2.}
        \label{fig:uncertainties_elec}
    \end{subfigure}
    \hfill
    \begin{subfigure}[t]{0.48\textwidth}
        \centering
        \includegraphics[width=\linewidth]{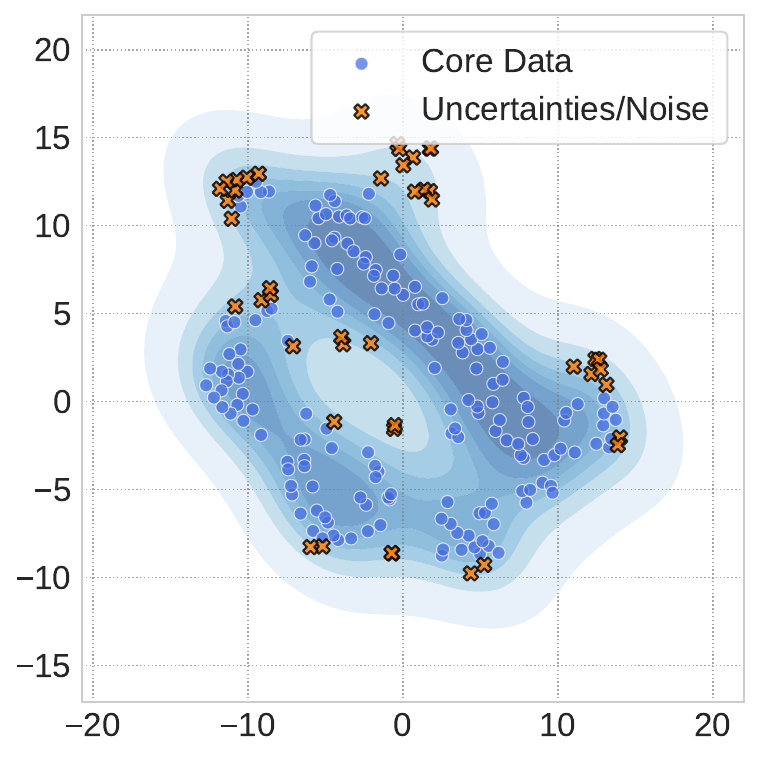}
        \caption{$\mathcal{D}_1$ on Appliance.}
        \label{fig:uncertainties_app}
    \end{subfigure}
    \caption{Visualization of domain-specific uncertainties on real-world datasets.}
    \label{fig:visual_uncertainties}
\end{figure}
\section{Supplementary Experiments}

\subsection{Visualization of Real-world Uncertainties}
\label{app:Qualitative-un}
To illustrate the challenge of domain-specific uncertainties, Figure~\ref{fig:visual_uncertainties} visualizes t-SNE embeddings of the first domain ($\mathcal{D}_1$) from two real-world datasets, Elec2 and Appliance, with overlaid Kernel Density Estimates (KDE) highlighting distributional structures.
On both real-world datasets, the data are not uniformly distributed; instead, they form multiple distinct high-density regions alongside sparser, peripheral points.
These peripheral points, identified as Uncertainties/Noise, typically reside at the fringes of core data concentrations. Such observed heterogeneity within a single temporal domain underscores the frequent violation of the IID assumption. The co-existence of these localized concentrations and scattered uncertainties implies that models attempting to uniformly fit all data within a domain risk overfitting to these domain-specific uncertainties, hereby impairing its ability to generalize to subsequent evolving domains. This vulnerability is precisely what FreKoo aims to mitigate through its targeted handling of different spectral components. 
\subsection{Qualitative Analysis of Decision Boundary}
\label{app:Qualitative-Analysis}
To provide a qualitative assessment of generalization capabilities, we visualize decision boundaries on the 2-Moons target domain, comparing FreKoo against representative TDG (DRAIN~\cite{bai2023temporal}) and CTDG (Koodos~\cite{cai2024continuous}) methods. As illustrated in Figure~\ref{fig:DRIAN_moons}, the DRAIN method exhibits a decision boundary that, while attempting to separate the classes, appears somewhat convoluted and potentially overfitted to the specific distribution of the training domains. This can lead to suboptimal generalization on the unseen target domain. Figure~\ref{fig:koodos_moons} shows the boundary learned by Koodos. While demonstrating a degree of adaptation, the boundary still shows some irregularities and does not perfectly capture the underlying structure of the target distribution.
In contrast, FreKoo (Figure~\ref{fig:frekoo_moons}) learns a notably smoother and more globally consistent boundary that effectively captures the target 2-Moons structure with less sensitivity to local variations. This smoother boundary is indicative of a more robust generalization, suggesting that FreKoo's frequency-domain parameter analysis and dual modeling strategy successfully disentangle stable underlying dynamics from transient fluctuations, leading to a more principled adaptation to the concept drift. This qualitative evidence aligns with the quantitative results, highlighting FreKoo's superior ability to generalize in temporally evolving environments.
\begin{figure}[h]
    \centering
    \begin{subfigure}[t]{0.32\textwidth}
        \centering
        \includegraphics[width=\linewidth]{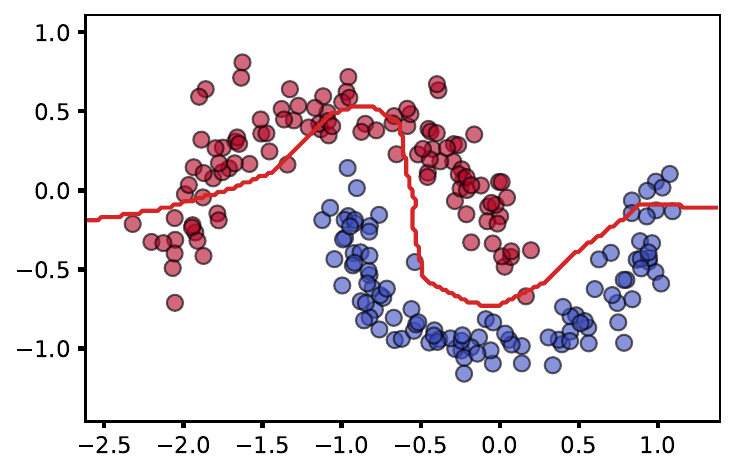}
        \caption{DRAIN}
        \label{fig:DRIAN_moons}
    \end{subfigure}
    \hfill
    \begin{subfigure}[t]{0.32\textwidth}
        \centering
        \includegraphics[width=\linewidth]{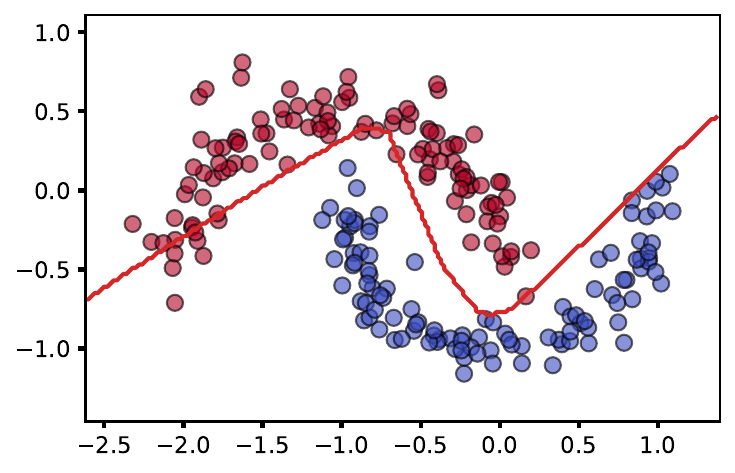}
        \caption{Koodos}
        \label{fig:koodos_moons}
    \end{subfigure}
    \hfill
    \begin{subfigure}[t]{0.32\textwidth}
        \centering
        \includegraphics[width=\linewidth]{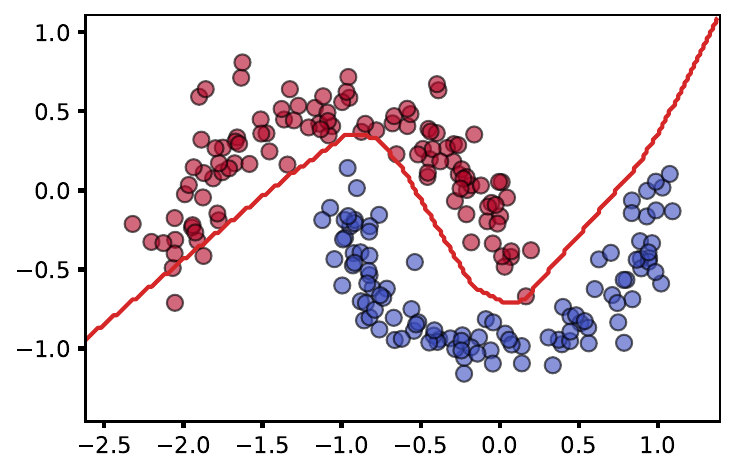}
        \caption{FreKoo}
        \label{fig:frekoo_moons}
    \end{subfigure}
    \caption{Visualization of decision boundaries on the 2-Moons dataset. Blue dots and red stars represent different data classes.}
    \label{fig:visual_moons}
\end{figure}

\subsection{Qualitative Analysis of Frequency Analysis}
\label{app:freqAnalysis}
Figure~\ref{fig:visual_params} visualizes the evolution of model parameters (averaged across dimensions) on P-2-Moons under varying spectral energy preservation ratios $\tau$. FreKoo decomposes the ``Raw Parameter'' trajectory into a ``Low-Freq Component,'' which captures the smoother, dominant underlying trends, and a ``High-Freq Component,'' which encapsulates more rapid, transient fluctuations often indicative of domain-specific noise or artifacts. As shown, the ``Low-Freq Component'' (dashed magenta) isolates the core dynamic structure. The ``Reconstructed Parameter'' (dashed green), derived from extrapolating the low-frequency dynamics and incorporating regularized high-frequency information, exhibits significantly enhanced smoothness compared to the raw trajectory. This targeted spectral separation and dual modeling strategy allows FreKoo to learn a more stable parameter evolution, mitigating overfitting to transient domain-specific details and thereby fostering improved generalization to future temporal domains. The choice of $\tau$ modulates the trade-off between fidelity to the core dynamics and robustness to high-frequency noise.
\begin{figure}[h]
    \centering
    \begin{subfigure}[t]{0.32\textwidth}
        \centering
        \includegraphics[width=\linewidth]{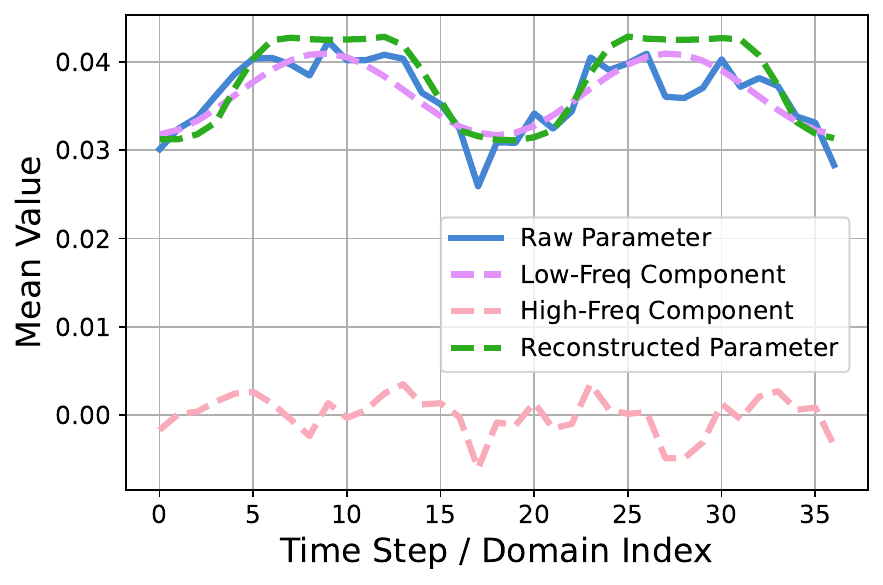}
        \caption{$\tau=0.1$}
        \label{fig:parameter_trajectory_tau_0.1}
    \end{subfigure}
    \begin{subfigure}[t]{0.32\textwidth}
        \centering
        \includegraphics[width=\linewidth]{parameter_trajectory_tau_0.5.pdf}
        \caption{$\tau=0.5$}
        \label{fig:parameter_trajectory_tau_0.5}
    \end{subfigure}
    \begin{subfigure}[t]{0.32\textwidth}
        \centering
        \includegraphics[width=\linewidth]{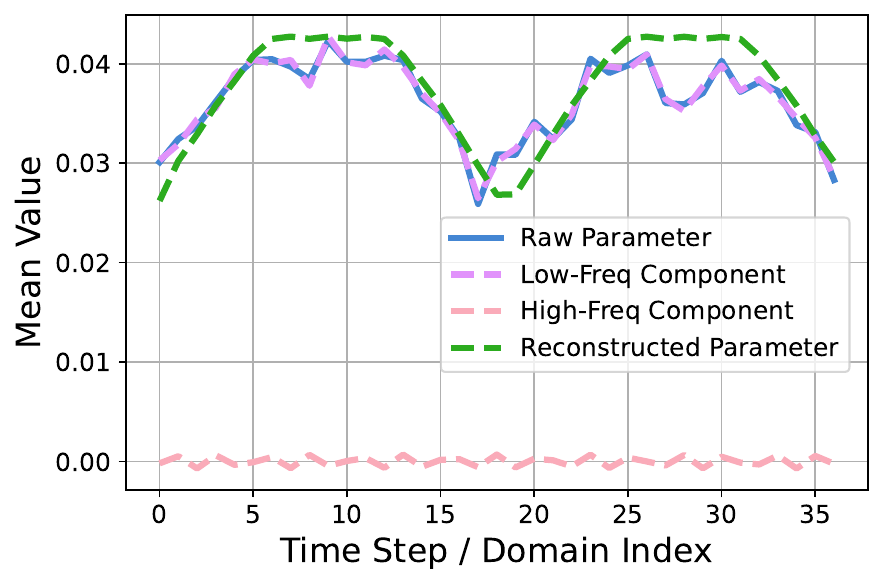}
        \caption{$\tau=0.9$}
        \label{fig:parameter_trajectory_tau_0.9}
    \end{subfigure}
    \caption{Visualization of parameter evolution for different components on the P-2-Moons dataset.}
    \label{fig:visual_params}
\end{figure}

\end{document}